\def\eqref#1{equation~\ref{#1}}
\def\1{\bm{1}}
\DeclareMathAlphabet{\mathsfit}{\encodingdefault}{\sfdefault}{m}{sl}
\SetMathAlphabet{\mathsfit}{bold}{\encodingdefault}{\sfdefault}{bx}{n}
\newtheorem{theorem}{Theorem}
\newtheorem{proof}{Proof}
\newtheorem{lemma}{Lemma}
\newcommand{\probP}{\text{I\kern-0.15em P}}
\titlespacing*{\subsubsection}{0pt}{0pt}{5pt}
\titlespacing*{\subsection}{0pt}{0pt}{5pt}
\titlespacing*{\section}{0pt}{0pt}{5pt}
\DeclarePairedDelimiter\ceill{\lceil}{\rceil}
\title{Sig2Model: A Boosting-Driven Model for Updatable Learned Indexes}
\author{
Alireza Heidari \\
  Huawei Technologies Ltd\\
  Vancouver, Canada \\
  \texttt{alireza.heidarikhazaei@huawei.com} \\
  \And
    Amirhossein Ahmadi \\
      Huawei Technologies Ltd\\
      Vancouver, Canada \\
      \texttt{amirhossein.ahmadi1@huawei.com} \\
  \And
    Wei Zhang \\
      Huawei Technologies Ltd\\
      Vancouver, Canada \\
      \texttt{wei.zhang6@huawei.com} \\
  \And
    Ying Xiong \\
      Huawei Technologies Ltd\\
      Vancouver, Canada \\
      \texttt{ying.xiong2@huawei.com} 
}
\begin{document}

\maketitle

\begin{abstract}
Learned Indexes (LIs) represent a paradigm shift from traditional index structures by employing machine learning models to approximate the Cumulative Distribution Function (CDF) of sorted data. While LIs achieve remarkable efficiency for static datasets, their performance degrades under dynamic updates: maintaining the CDF invariant ($\sum F(k) = 1$) requires global model retraining, which blocks queries and limits the Queries-per-second (QPS) metric. Current approaches fail to address these retraining costs effectively, rendering them unsuitable for real-world workloads with frequent updates.
In this paper, we present \textbf{Sig2Model}, \textit{an efficient and adaptive learned index that minimizes retraining cost} through three key techniques: \textit{(1) A Sigmoid boosting approximation} technique that dynamically adjusts the index model by approximating update-induced shifts in data distribution with localized sigmoid functions that preserves the model’s $\epsilon$-bounded error guarantees while deferring full retraining. 
\textit{(2) Proactive update training} via Gaussian Mixture Models (GMMs) that identifies high-update-probability regions for strategic placeholder allocation that speeds up updates coming on these slots.
\textit{(3) A neural joint optimization framework} that continuously refining both the sigmoid ensemble and GMM parameters via gradient-based learning.
We rigorously evaluate Sig2Model against state-of-the-art updatable LIs on real-world and synthetic workloads, and show that Sig2Model \textit{reduces retraining cost by $20\times$ while it shows up to $3\times$ higher QPS and $1000\times$ lower memory usage.}
\end{abstract}

\section{Introduction}
\label{sec:intro}
\textbf{Context.} Learned Indexes (LIs)~\citep{ding2020alex, chatterjee2024limousine, li2020lisa, doblix, tang2020xindex, kipf2020radixspline, fcvi, kim2024accelerating, lan2024fully, UpLIF} represent a paradigm shift in database indexing by replacing traditional pointer-based structures (e.g., B-trees) with machine learning models that directly approximate the \emph{cumulative distribution function (CDF)} of sorted data. At their core, LIs treat the indexing problem as a modeling task: given a sorted dataset, they learn a mapping of keys to their positions by fitting the CDF $F(k)$, which describes the probability that a key $\leq k$ exists in the dataset. This approach enables \emph{single-step position predictions} during queries, bypassing the $O(\log n)$ traversals of B-trees~\citep{ferragina2020learned}.

The Recursive Model Index (RMI)~\citep{kraska2018case} exemplifies this approach through a hierarchical model ensemble, where higher levels narrow the search range and the leaf models predict exact positions. Practical implementations like ALEX~\citep{ding2020alex}, DobLIX~\citep{doblix}, LISA~\citep{li2020lisa} optimize this further using \emph{piecewise linear regression}, partitioning the key space into segments modeled by:$\text{pos} = a \times k + b \pm E$, where $a,b$ are learned parameters and $E$ bounds the prediction error, ensuring correctness via a final localized search ($\epsilon$-bounded error), and achieving $2$--$10\times$ faster lookups than B-trees for static data~\citep{ferragina2020learned}.

However, a fundamental limitation of LIs stems from their inherent assumption of static data distributions. Since CDF must maintain $\sum F(k) = 1$, any update to the key domain (insertions/deletions) necessitates non-local adjustments to the entire model. This requirement makes it particularly challenging to preserve model accuracy in dynamic workloads~\citep{10.14778/3587136.3587148}.

Previous studies~\citep{10.1145/3626752, hyper2024, ding2020alex, galakatos2019fiting, swix2024, tang2020xindex, ferragina2020pgm, wu2024modeling,doblix} attempt to address this problem. Aside from methods such as DobLIX~\citep{doblix}, which implement LI in read-only structures and thus avoid update issues, other approaches have limitations because they ignore the training cost of LI models~\citep{10.14778/3551793.3551848, 10184769, UpLIF}. Figure~\ref{fig:motive_retrain}\textcolor{blue!70!black}{(a)} quantifies the retraining overhead of three state-of-the-art updatable LIs. The results reveal two key insights: (1) LIPP~\citep{wu2021updatable} and DILI~\citep{10.14778/3598581.3598593} exhibit frequent retraining (approximately once every 500 updates), and (2) while ALEX~\citep{ding2020alex} shows fewer retraining events, each retraining incurs significantly higher latency.
These excessive retraining overheads render current LI systems impractical for update-intensive workloads, common in real-world applications, as each retraining operation blocks query processing and severely degrades system QPS. 

\noindent
\textbf{Motivation.} 
Figure~\ref{fig:update_pattern}\textcolor{blue!70!black}{(b)} illustrates how a single update affects the key space of an LI model. This model, denoted as $M_i$~\footnote{The full table of notations is provided in Appendix~\ref{app:notations}.}, with a non-zero error $E$, maps a range across a key space. Incoming updates, viewed as a random variable, impact this range in four distinct ways: ($u_1$) shifts all elements uniformly without changing the range size; ($u_2$) expands the range by shifting $M_i(k)$ to the right; ($u_3$) enlarges the range on the right side; ($u_4$) does not alter the range size.

\begin{figure}[t]
  \centering
    \makebox[\columnwidth][c]{\includegraphics[width=\columnwidth]{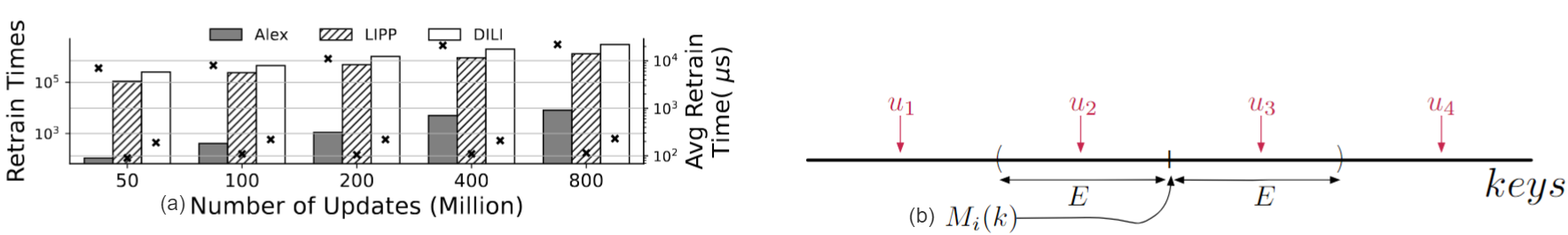}}
  \caption{ \textbf{(a) Retrain cost on three updatable LIs. Number of retrain occurrences and average retrain duration are shown by bar plots and $\times$ markers, respectively. (b) Impact of update on an LI model $M_i(.)$}}
  \label{fig:update_pattern}
  \label{fig:motive_retrain}
\end{figure}

Each update induces a step-wise displacement in the model’s prediction space. We reformulate retraining as a distributional prediction problem, where sigmoid functions approximate these discrete shifts smoothly. The differentiable properties of the sigmoid enable gradual adaptation of the model, deferring full retraining of CDF. For a single update, the sigmoid $\sigma_0(k, A, \omega_u, u)$ will be added to the model  $M_i(k)$. When incremental updates exceed the capacity of a single sigmoid, we introduce a \emph{SigmaSigmoid} ensemble to capture cumulative effects:
$
    M_i'(k) = M_i(k) + \sum_{i=1}^\mathcal{N} \sigma(k, A_i, \omega_i, \phi_i)
    \label{eq:sigma_sigmoid}
$
, where \( \mathcal{N} \) dynamically grows with new update patterns. This boosting approach amortizes retraining costs by (1) preserving existing model parameters and (2) isolating adjustments to affected regions via localized sigmoid terms (See Appendix~\ref{app:motivational_example} for a detailed motivational example). 

\noindent
\textbf{Approach.}
\label{subsec:approach}
We propose \underline{\textbf{Sig}}ma-\underline{\textbf{Sig}}moid \underline{\textbf{Model}}ing, (\textbf{Sig2Model}), \textit{an efficient updatable learned index that minimizes retraining through adaptive sigmoid approximation and proactive workload modeling.} Our approach introduces three key techniques: 
\textit{First}, Sig2Model employs a \textit{sigmoid boosting technique} that dynamically adjusts the index model to incoming updates. By approximating the step-wise patterns of updates with localized sigmoid functions, each acting as a weak learner, the system incrementally corrects model errors while maintaining 
$\epsilon$-bounded error guarantee. This allows continuous model adaptation without immediate retraining.
\textit{Second}, we develop a \textit{Gaussian Mixture Model (GMM)} component that predicts update patterns, enabling strategic insertion of placeholders in high-probability update regions. This anticipatory mechanism significantly reduces future retraining needs by pre-allocating space in frequently modified index segments for future updates.
\textit{Third}, Sig2Model integrates these components through a \textit{unified neural architecture} that jointly optimizes both the sigmoid ensemble parameters and GMM distributions via gradient-based learning. The system processes updates in batched operations through a dedicated buffer, with a control module monitoring model error bounds to trigger retraining only when necessary. During retraining phases, the system simultaneously refines both the sigmoid approximations and placeholder allocations based on observed update patterns.

Sig2Model adapts an LI model\footnote{Our implementation builds upon RadixSpline~\citep{kipf2020radixspline}.} to support efficient updates using the above three techniques. The end-to-end workflow of Sig2Model for update and lookup operations is shown in Figure~\ref{fig:Sig2Model-arch}. For updates, the system first checks whether the incoming update $u$ exists in the current index domain $D^\tau$. If a pre-allocated placeholder is available, $u$ is inserted directly. Otherwise, it is staged in the Update Buffer. This buffer serves as an efficient batching mechanism, accumulating updates until a threshold ($\rho$) is reached, at which point neural network training is triggered to optimize the SigmaSigmoid and GMM parameters. When the number of active sigmoids reach system capacity $\mathcal{N}$ during this process, a full retraining is initiated. For lookups, queries first probe the buffer for key $k$. On a miss, the inference module applies the learned SigmaSigmoid adjustments to the base model to perform a final search within the LI's error bound $\pm E$. If $k$ not found in the $E$ range, Sig2Model performs an exhaustive search and triggers the retraining signal.

\noindent
\textbf{Contributions.}
The main contributions of this paper are as follows:
    \textit{(1) Index Model Approximation ($\Pi$)}: We propose Sig2Model, a novel method that leverages sigmoid functions as weak approximators, similar to boosting techniques, to dynamically update the LI model. This approach significantly reduces the need for retraining, offering an efficient and adaptive solution.
    \textit{(2) Probabilistic Update Workload Prediction ($\Psi$)}: We use GMM in Sig2Model to predict high-density regions in the key distribution, allowing strategic placeholder placement and postponing retraining.
    \textit{(3) Neural Joint Optimization}: We propose two neural networks architecture ($NN_\Pi$, $NN_\Psi$) connected via a shared layer ($NN_c$) that continuously fine-tunes both $\Pi$ and $\Psi$ in a background process.
    \textit{(4) Comprehensive Experimental Evaluation}: We rigorously evaluate the performance of Sig2Model through extensive experiments, and show over 20$\times$ reduce in retraining time and an increase of up to 3$\times$ in QPS compared to the state-of-the-art LI solutions.


\begin{figure}[t]
  \centering
  \makebox[\columnwidth][c]{\includegraphics[width=\columnwidth]{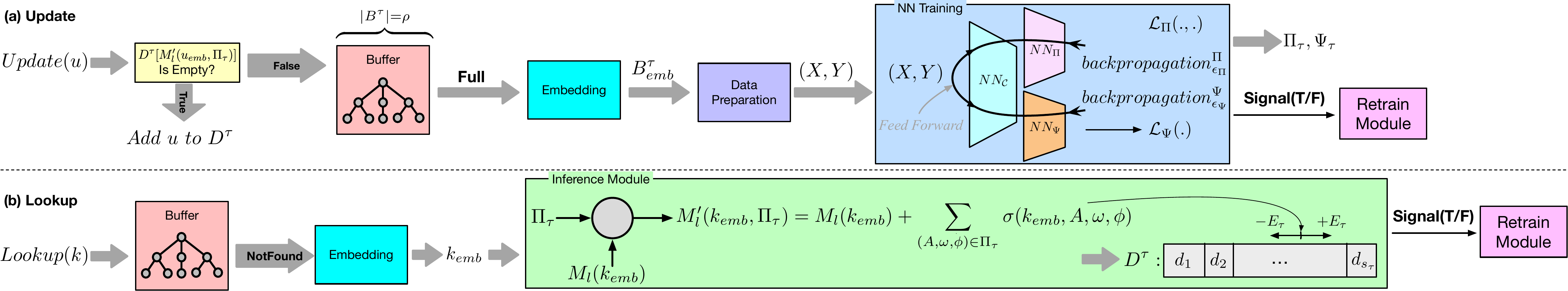}}
  \caption{ \textbf{Sig2Model Overview.}}
  \label{fig:Sig2Model-arch}
\end{figure}

\section{Index Model Approximation}
\label{sec:approximation}

In this section, we present the \textbf{SigmaSigmoid Boosting} approach for capturing model updates efficiently. We assume that updates originate from an unknown distribution, denoted as $\mathcal{D}_{update}$.
The bias introduced by updates, which can be modeled as a step function, is approximated using smooth, differentiable sigmoid functions (see Motivation in  Section~\ref{sec:intro}). This approximation avoids abrupt changes and delays retraining by gradually adjusting the model.

For a model $M$ at stage $\tau$, if dataset $D^\tau$ (with $s_\tau$ keys) receives an update $u$ between keys $k_j$ and $k_{j+1}$, the index model can be adjusted without full retraining using $D^\tau \cup \{u\}$. If a single sigmoid fails to capture an update, additional sigmoids can be introduced. The combined effect of these sigmoids, termed the SigmaSigmoid function, adjusts the model and postpones retraining.

The resulting SigmaSigmoid-based model, $M'_l(k, \Pi)$, is defined as:
\begin{equation}
\resizebox{.4\columnwidth}{!}{$
M'_l(k, \Pi) = M_l(k) + \underbrace{\sum_{i=1}^\mathcal{N} \sigma(k, A_i, \omega_i, \phi_i)}_{\text{Adjustments to capture updates}}
$}
\label{eq:sigmasigmoid}
\end{equation}
$\Pi = \{(A_i, \omega_i, \phi_i)\}_{i \in \mathcal{N}}$ parameterizes $\mathcal{N}$ sigmoids, where
$A_i$ controls amplitude,
$\omega_i$ controls step steepness, and
$\phi_i$ centers the sigmoid around update locations.
The system capacity $\mathcal{N}$ may differ from the number of updates $|U|$, often with $|U| \geq \mathcal{N}$. This model raises theoretical questions and defines system limits, discussed in Appendix~\ref{sec:theory}.
\noindent Ideally, $round(M_l(k_{j+1})) = round(M_l(k_j)) + 1$, requiring a new model $M_l'$ such that:

\begin{equation}
\resizebox{.35\columnwidth}{!}{$
1 \leq \big| M'(k_{j+1}, \Pi) - M'(k_j, \Pi) \big| \leq 2.
\label{eq:strong_condition}
$}
\end{equation}

\noindent It is important to note that Equation~\ref{eq:sigmasigmoid} has a trivial solution for $\mathcal{N}$ updates or less, even without training, by setting $\phi_i=u_i$ and $A_i=1$ for every $i\in [1:\mathcal{N}]$. However, this approach does not fully leverage the maximum capacity of SigmaSigmoid modeling.

\subsection{Optimization Objectives}
Given a dataset $D$ and update set $U$ (collected from buffer $B$ of size $\rho$), we adjust each key's index based on updates $x \in U$ where $x < k$. The updated model $M^\star(\cdot)$ is obtained by retraining on $D \cup U$. Our goal is to find parameters $\Pi$ for $M'(\cdot, \Pi)$ that minimize:

\begin{equation}
\resizebox{.5\columnwidth}{!}{$
\mathcal{L}(\Pi) = \frac{1}{|D \cup U|} \sum_{k \in D \cup U} \big| M'(k, \Pi) - M^\star(k) \big| + \frac{\gamma}{\mathcal{N}} f(|\Pi|)
$}
\label{eq:cost}
\end{equation}

where $f(\cdot)$ is monotonically increasing, and $|\Pi| \leq \mathcal{N}$. The optimization problem is:

\begin{equation}
\begin{aligned}
\arg \min_{M'(\cdot, \Pi) \in \mathbb{M}} & \quad \mathcal{L}(\Pi) \\
\text{s.t.} & \quad \mathbb{E}_{u \sim \mathcal{D}_{update}} \big[ |M'(u, \Pi) - M^\star(u)| \big] \leq \alpha, \\
& \quad \mathbb{P} \big[ |M'(k, \Pi) - M'(u, \Pi)| < E_\Pi \big] \leq \beta \quad \forall u \sim \mathcal{D}_{update}, k \in D \cup U.
\end{aligned}
\label{eq:opt}
\end{equation}

Solving Equation~\ref{eq:opt} is challenging for arbitrary hypothesis spaces $\mathbb{M}$. We use sigmoids as base models, transforming the original index function $M$. In this equation, $\alpha$ bounds prediction bias, and $\beta$ specifies the allowable error level. When $\beta=0$, the problem becomes NP-Hard, requiring a complete search in $\mathbb{M}$ or even infeasible. $E_\Pi$ present the measure of confusion in LI models' prediction can be viewed as the variance of the index estimator for incoming $u\sim \mathcal{D}_{update}$. An unbiased estimator ($\alpha = 0$) is preferred over high variance, as variance only expands the last-mile search range. In addition, the target key in the lower variance has a higher likelihood of presence in the center of the predicted range.

To ensure a non-empty feasible solution space for the optimization problem described in Equation~\ref{eq:opt}, it is necessary to show that maintaining unbiasedness ($\alpha\approx 0$) leads to increased variance, thereby bias-variance analysis indicates that $E_{\Pi}\propto \frac{1}{\alpha}$. Additionally, the second constraint in Equation~\ref{eq:opt} accounts for the most probable incoming updates to optimize intervals effectively, and is the relaxed derived as a modification of Equation~\ref{eq:strong_condition}, as shown in Theorem~\ref{theorem1} (proof in Appendeix~\ref{app:theorem}):
\begin{theorem}
    If $\mathcal{D}_{update}$ is bounded, Equation~\ref{eq:strong_condition} satisfies $\probP\big[ \vert M'(k, \Pi)-M'(u, \Pi) \vert<E_{\Pi}\big] \leq \beta \quad $ for all $ u\sim \mathcal{D}_{update}, k\in D\cup U$. 
    \label{theorem1}
\end{theorem}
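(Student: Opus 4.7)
The plan is to convert the pointwise spacing guarantee from Equation~\ref{eq:strong_condition} into a probability statement by exploiting the monotone structure of $M'$ and the boundedness hypothesis on $\mathcal{D}_{update}$. Concretely, for any fixed $k\in D\cup U$, the event $\{\,|M'(k,\Pi)-M'(u,\Pi)| < E_\Pi\,\}$ pulls back through $M'$ to a contiguous subset $I_k\subseteq \mathbb{R}$ of the key axis, and the goal is to upper-bound $\mathbb{P}_{u\sim\mathcal{D}_{update}}[u\in I_k]$ by $\beta$.

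First, I would argue that $M'(\cdot,\Pi)$ is (at least locally around the keys of interest) strictly monotone increasing: the base model $M_l$ approximates a CDF-like position function, and each additive sigmoid $\sigma(\cdot,A_i,\omega_i,\phi_i)$ is itself monotone, so the ensemble in Equation~\ref{eq:sigmasigmoid} is monotone. Combining this with Equation~\ref{eq:strong_condition}, consecutive sorted keys $k_j,k_{j+1}\in D\cup U$ satisfy $M'(k_{j+1},\Pi)-M'(k_j,\Pi)\ge 1$, so a telescoping argument shows that at most $\lceil 2E_\Pi\rceil$ keys of $D\cup U$ can have model outputs inside the window $(M'(k,\Pi)-E_\Pi,\, M'(k,\Pi)+E_\Pi)$. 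This bounds the ``combinatorial width'' of $I_k$ in terms of key index.

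Second, I would turn this combinatorial bound into a Lebesgue-measure bound on $I_k\subseteq \mathbb{R}$. Writing $k_{-}$ and $k_{+}$ for the extreme keys in $D\cup U$ whose model outputs still lie within $E_\Pi$ of $M'(k,\Pi)$, monotonicity gives $I_k\subseteq [k_-,k_+]$; the length $k_+-k_-$ is then controlled by the largest gap between consecutive keys inside a window of at most $\lceil 2E_\Pi\rceil$ positions around $k$, which is finite for any fixed $D\cup U$.

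Finally, I would invoke boundedness of $\mathcal{D}_{update}$: interpreting it as bounded support with density essentially bounded by some constant $C$, this yields $\mathbb{P}_{u\sim \mathcal{D}_{update}}[u\in I_k]\le C\cdot (k_+-k_-)$, and defining $\beta$ as the supremum of the right-hand side over $k\in D\cup U$ closes the inequality. The main obstacle I anticipate is step three, bounding $k_+-k_-$ uniformly in $k$: on sparsely populated subintervals the gap between adjacent keys could in principle be arbitrarily wide, so the probability need not be small. I would sidestep this by leveraging the $\epsilon$-bounded error property of the underlying piecewise-linear base model (which caps the local slope of $M_l$ and hence the geometric length corresponding to a fixed number of positions), or, alternatively, by formally absorbing the worst-case gap into $\beta$, which Equation~\ref{eq:opt} already treats as a tunable slack parameter rather than an externally prescribed constant.
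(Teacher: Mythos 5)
Your proposal is correct and follows essentially the same route as the paper's own proof: the spacing guarantee of Equation~\ref{eq:strong_condition} confines the event $\{|M'(k,\Pi)-M'(u,\Pi)|<E_\Pi\}$ to $u$ lying in a small neighborhood of $k$ (at most a couple of adjacent keys and the gap they span), and boundedness of $\mathcal{D}_{update}$ then yields a finite $\beta$ chosen from the density. If anything, your write-up is the more careful one — the pull-back to the interval $I_k$ via monotonicity and the explicit measure/density bound fix the paper's loose step of bounding by the probability of a discrete neighbor set, and your fallback of absorbing the worst-case gap into $\beta$ is exactly the paper's implicit treatment of $\beta$ as a density-dependent slack.
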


\subsection{Neural Unit for Fine-Tuning}
\label{subsec:neural_unit}

\begin{figure}[]
  \centering
  \makebox[\columnwidth][c]{\includegraphics[width=\columnwidth]{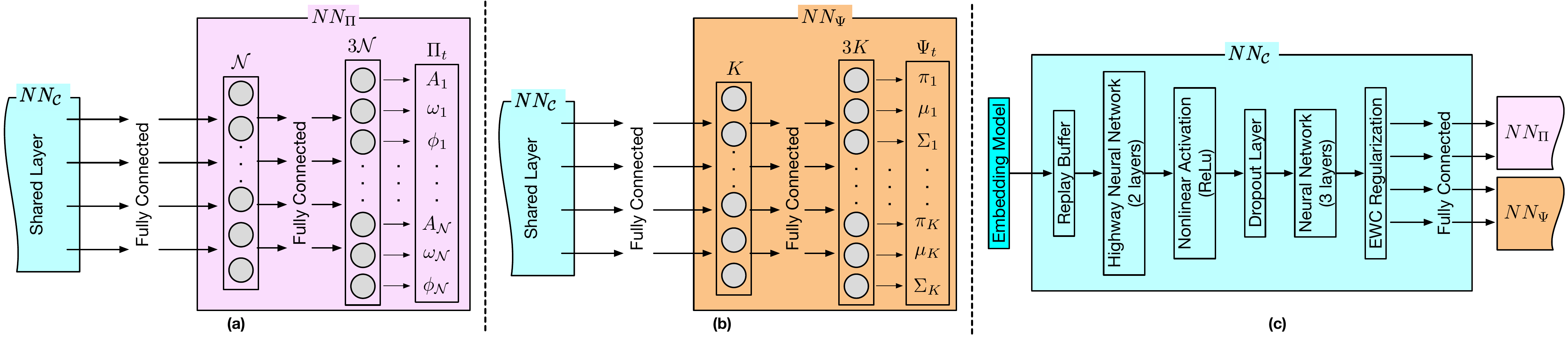}}
  \caption{ \textbf{Neural Networks Architectures. The multi-layer neural network \(NN_{\mathcal{C}}\)(c) processes sequential data batches for continuous learning. It employs highway networks, ReLU activations, and dropout layers to extract patterns and prevent overfitting. To address catastrophic forgetting~\citep{kemker2018measuring}, two strategies are used: \emph{Replay Buffer}~\citep{di2022analysis} and \emph{Elastic Weight Consolidation}~\citep{kirkpatrick2017overcoming}. The network outputs are fed into two subnets, \(NN_{\Pi}\)(a) and \(NN_{\Psi}\)(b), each containing specific tasks. We explain $NN_C$ architecture in Appendix~\ref{app:nnc}.
}}
  \label{fig:NNs}
\end{figure}

The parameter set $\Pi = \{(A_i, \omega_i, \phi_i)\}_{i \in \mathcal{N}}$, where $\mathcal{N}$ is a hyperparameter, requires dynamic fine-tuning to adjust the LI model $M$. We implement this through a neural network $NN_\Pi$ with two fully-connected input networks (having $\mathcal{N}$ and $3\mathcal{N}$ parameters respectively) connected to a shared layer $NN_C$ as shown in Figure~\ref{fig:Sig2Model-arch}.

To minimize last-mile search errors, $NN_\Pi$ is optimized to near-overfit conditions using mean squared error (MSE) as the loss function: $MSE(X^\tau,Y^\tau) = \frac{1}{|X|} \sum_{(X^\tau_i,Y^\tau_i) \in (X^\tau,Y^\tau)} (\hat{I}_i - Y_i)^2$, where $\hat{I}_i$ is computed via Equation~\ref{eq:sigmasigmoid}.

The cost function incorporates a regularization term to minimize sigmoid usage for new buffer entries $B^\tau$:
\begin{equation}
\resizebox{.6\columnwidth}{!}{$
\mathcal{L}_\Pi(X^\tau,Y^\tau) = MSE(X^\tau,Y^\tau) + \frac{\gamma}{\mathcal{N}}\sum_{j=1}^\mathcal{N} \frac{\rho A_j}{d}\exp\left(1-\frac{A_j}{d}\right)
\label{eq:cost:pi}
$}
\end{equation}
where $\rho$ is the buffer size, $d$ is the normalization constant, and $\gamma$ is an experimentally-tuned hyperparameter.

From Equation~\ref{eq:cost}, we derive $|\Pi| \approx \sum \mathbb{I}_{(A,.,.) \in \Pi, A \neq 0}$ and $f(x) = \frac{x}{d}\exp(1-\frac{x}{d})$ (which is monotonic). This regularization design preferentially penalizes smaller amplitudes ($A_j$), pushing them toward zero while allowing larger amplitudes to cover more examples, consistent with the monotonicity of the index function. The optimization process solves Equation~\ref{eq:opt} using the prepared data from Section~\ref{sec:dataprep}.


\begin{figure}[t]
  \centering
    \begin{subfigure}{0.48\textwidth} 
    \centering
    \includegraphics[width=\linewidth]{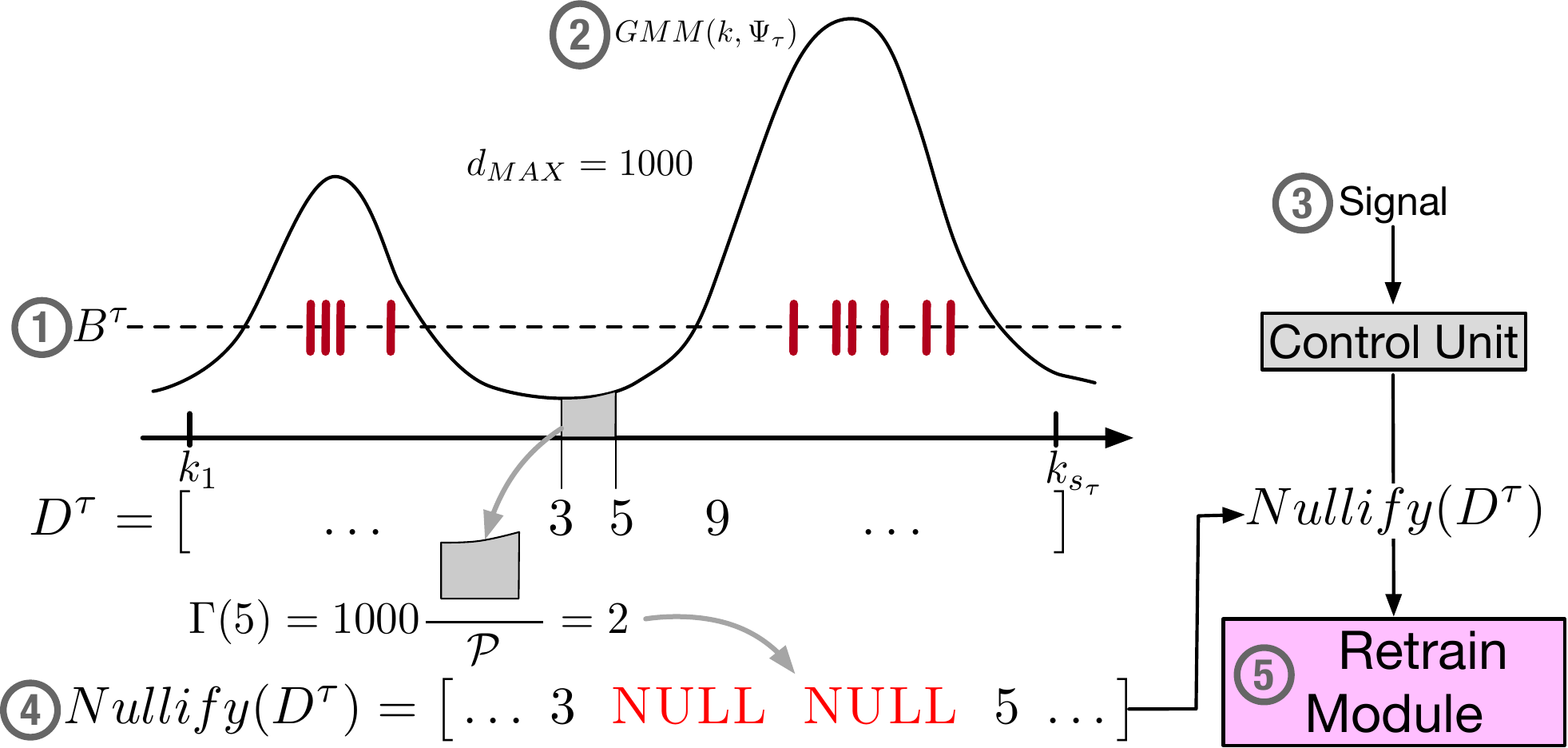}
  \end{subfigure}
  \hfill 
  \begin{subfigure}{0.48\textwidth}
    \centering
    \includegraphics[width=\linewidth]{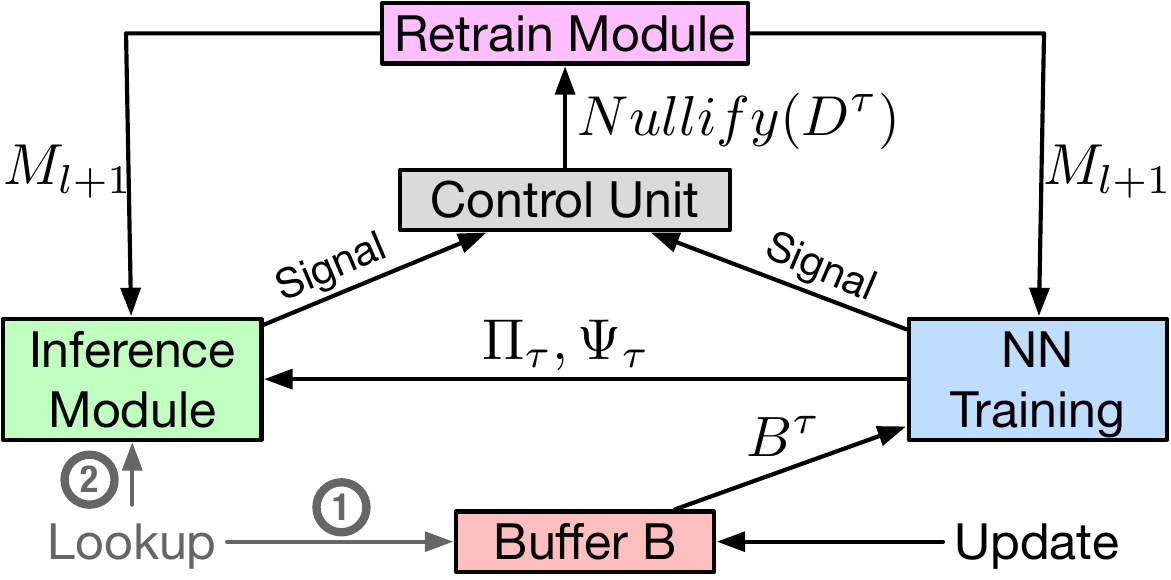}
  \end{subfigure}

  \caption{ \textbf{(a) Steps for using the estimated update workload to produce placeholders. (b) Retrain Policy}}
  \label{fig:gmm}
  \label{fig:retrain}
\end{figure}

\section{Update Workload Training}
\label{sec:update-prediction}
In this section, we introduce Sig2Model's \textbf{update workload training}, which is called \textit{Nullifier} component. Sig2Model trains a probabilistic model on the incoming update distribution, and whenever a retraining signal is raised, it uses this trained distribution to put the update placeholder to improve system performance.

Nullifer manages data with a workload $\mathcal{D}_{keys}$ by creating space for updates based on their distribution. It operates on input data $D^\tau=[k_1,k_2,\dots,k_{s_\tau}]$ drawn from $\mathcal{D}_{keys}$, with a maximum gap $d_{MAX}$, and extends $D^\tau$ using the update distribution $\mathcal{D}_{update}$ (see Figure~\ref{fig:Sig2Model-arch}\textcolor{blue!70!black}{(a)}). The gap size between keys $k_i$ and $k_j$ $(i<j)$ is calculated as

\begin{equation}
\label{eq:nullifier}
\resizebox{.7\columnwidth}{!}{$
    GS(k_i,k_j) = \ceill*{\frac{d_{MAX}.\int_{k_i}^{k_j}\mathcal{D}_{update}(x)dx}{\mathcal{P}:=\int_{k_1}^{k_{s_\tau}}\mathcal{D}_{update}(x)dx}}\approx d_{MAX} \ceill*{\frac{\int_{k_i}^{k_j}GMM(x,\Psi_\tau)dx}{\mathcal{P}}}
    $}
\end{equation}

\noindent
To approximate $\mathcal{D}_{update}$ using updates $U$, a Gaussian Mixture Model (GMM)~\citep{Zhao_2023_ICCV} is used. The nullifier creates gaps in $D^\tau$ by iterating over successive elements and applying Equation~\ref{eq:nullifier}. For $j=i+1$, this simplifies to $\Gamma(k)=GS(k_{-},k)$. Nullifer produces \textsc{NULL} values for each $k\in D^\tau$, and the updated index for $k$ is calculated as $GS(k_1,k)\times Index(k)$, with $\Gamma(k)$ vacant spaces before and $\Gamma(k_{+})$ after. Figure~\ref{fig:gmm}\textcolor{blue!70!black}{(a)} illustrates the procedure, initiated by the Control Unit and relayed to the Retrain module, for constructing a new index model. This occurs after the index ($Y$) is revised by integrating placeholders among the data by utilizing the trained GMM.

\subsection{Training Incoming Workload}

The workload updates, $\mathcal{D}_{update}$ distribution, is modeled as a GMM (Section~\ref{sec:gmm_intro}), $\mathcal{D}_{update} \sim \text{GMM}(k, \Psi_t)$, where $\Psi_t = \{(\pi_i, \mu_i, \sigma_i)\}_{i=1}^K$: weights $\pi_i$, means $\mu_i$, and standard deviations $\sigma_i$. The GMM parameters generated by a neural network. Training minimizes a loss function balancing model accuracy and simplicity by penalizing unnecessary components.

The initial GMM parameters are established using a greedy approach (see Algorithm~\ref{alg:init_gmm} in Appendix~\ref{app:alg}). Starting with the data set $D^0$, it groups data into distributions by iteratively forming candidate Gaussians using the two smallest elements and adding points that meet a confidence threshold $\delta$ given as a hyperparameter. Once no more points fit, the cluster is finalized, and the process repeats until all data is assigned. This yields initial parameters $\Psi_0 = \{(\pi_i, \mu_i, \sigma_i)\}_{i=1}^K$ and an initial $K$, which may overestimate the true number of components, but provides a strong starting point.

\subsection{Neural Unit for Fine-Tuning.}

Figure~\ref{fig:Sig2Model-arch}\textcolor{blue!70!black}{(b)} shows the neural architecture $NN_\Psi$ that aims to fine-tune $\psi$ parameters. The model consists of a single fully connected hidden layer with $K$ neurons and leads to an output layer with $3K$ neurons. These layers are set up post-initialization, with the hidden layer being entirely linked to the shared layer $NN_C$. The neural network, initialized with $\Psi_0$ from Algorithm~\ref{alg:init_gmm}, directly predicts the GMM parameters $\Psi = \{(\pi_i, \mu_i, \sigma_i)\}_{i=1}^K$. During training, the loss is computed for each batch of input data, gradients are calculated, and network weights are updated via gradient descent. The process continues until the loss converges or GMM parameter changes are negligible. The regularization term ensures components with small weights are suppressed, simplifying the model without explicit pruning.

The GMM parameters are trained using the following loss function:

\begin{equation}
\resizebox{.65\columnwidth}{!}{$
    \mathcal{L}_\Psi(X^\tau) = - \sum_{x \in X^\tau} \log \bigg(\sum_{i=1}^{K} \pi_i N\big(x \mid \mu_i, \sigma_i\big) \bigg) + c\frac{1}{K}\sum_{i=1}^{K} \pi_i^\nu,
    \label{eq:cost:psi}
$}
\end{equation}

The first term ensures accurate data modeling $X^\tau$, and the second term penalizes small weights $\pi_i$ to reduce unnecessary components. The regularization strength is controlled by $c > 0$, and $0 < \nu \leq 1$ adjusts the penalty's scaling. This encourages sparsity, which reduces the number of active components.


\section{Sig2Model Training}
This section describes the core technical implementation of Sig2Model for the neural network training and model retraining. We first explain the data preparation phase, and then delve into the training processes.

\subsection{Data Preparation}
\label{sec:dataprep}

The data preparation phase combines buffer indices with original data indices to provide the neural network\begin{wrapfigure}{r}{0.6\textwidth}
  \begin{minipage}{0.6\textwidth}
\begin{algorithm}[H]
\caption{Training Procedure for \textit{Sig2Model} Neural Networks}
\label{alg:nn_training}
\begin{algorithmic}[1]
\State \textbf{Input:} Buffer $B^\tau$, Representation $X$, Labels $Y$, Thresholds $\epsilon_\Pi$, $\epsilon_\Psi$, Maximum Iterations $MaxIter$, Neural Network $NN = \{NN_{\mathcal{C}},NN_\Pi,NN_\Psi\}$,  Sampling Fraction $\lambda$, Replay Buffer Size $\kappa$ 
\State \textbf{Output:} Fine-Tuned $NN$

\State \textbf{Initialization:} $iterations \gets 0,~ L_{\Pi} \gets 2\epsilon_\Pi,~ L_{\Psi} \gets 2\epsilon_\Psi$
\State $B^\tau_{idx}\gets ReindexRB(B^\tau,RB^{\tau-1})$ \Comment{Reindex $RB^{\tau-1}$ using  Alg.~\ref{alg:rb_index_update}~(Appendix~\ref{app:alg})}
\Repeat
    \State $\Pi,\Psi\gets FeedForward (X \cup RB^{\tau-1})$ using updated weights
    \State $L_{\Pi}\gets \mathcal{L}_{\Pi}(X \cup B^\tau, Y\cup RB^{\tau-1}.I)$ \Comment{Using Equation~\ref{eq:cost:pi}}
    
    \If {$L_{\Pi} \le \epsilon_\Pi$ \textbf{and} $L_{\Psi} \le \epsilon_\Psi$}
        \State \textbf{break} \Comment{Desired error thresholds achieved}
        \EndIf
        \If {$L_{\Pi} > \epsilon_\Pi$} \Comment{$backpropagation^\Pi_{\epsilon_{\Pi}}$ in Fig~\ref{fig:Sig2Model-arch}\textcolor{blue!70!black}{(b)}}
            \State Perform backpropagation on $NN_\Pi$
             \State $\Pi,\Psi\gets FeedForward (X \cup RB^{\tau-1})$
            \EndIf
            \State $L_{\Psi}\gets \mathcal{L}_{\Psi}(X \cup B^\tau)$\Comment{Using Equation~\ref{eq:cost:psi}}
            \If {$L_{\Psi} > \epsilon_\Psi$}\Comment{$backpropagation^\Psi_{\epsilon_{\Psi}}$ in Fig~\ref{fig:Sig2Model-arch}\textcolor{blue!70!black}{(b)}}
            \State Perform backpropagation on $NN_\Psi$
            
            \EndIf
\State $iterations \gets iterations + 1$

\Until { $iterations \geq  MaxIter$}
\State $RB^{\tau}\gets UpdateRB(B^\tau_{idx}, RB^{\tau-1}, \lambda,\kappa)$ \Comment{Update $RB$ using  Alg.~\ref{alg:rb_update}~(Appendix~\ref{app:alg})}
\end{algorithmic}
\end{algorithm}
\vspace{-2em}
  \end{minipage}
\end{wrapfigure} with both embedding representations and updated positional contexts. This integration enables the network to learn patterns based on data features and their positional relationships.

\noindent\textbf{Buffer and Embedding.}
When the buffer, implemented using a B-tree, reaches its capacity $\rho$, nodes are traversed to sort the data and produce $B^\tau$, where $\tau$ represents the buffer's overflow count (stage). Using the current model $M'_l$, the positions  of the data points are determined. If a position in $D^\tau$ is already occupied, the data is stored in the buffer. For simplicity, we assume $D^\tau$ is fully occupied, although in practice the training focuses on unaccommodated data.

\noindent\textbf{Constructing Representation.} Inputs $X^\tau$ are derived from the embeddings of current data $D^\tau$, $D^\tau_{emb}$ (\( D_{emb}^\tau \) is the representation of $D^\tau$). First, we obtain $I_{B^\tau_{emb}}$ which is the current buffer indexes from $M'_l(.,\Pi_\tau)$. Then we select the element of $D^\tau_{emb}$ that the index $I_{B^\tau_{emb}}$ refers to, thus $X^\tau = D_{emb}^\tau[I_{B^\tau_{emb}}]$. Thus, $X^\tau$ corresponds to the embedding of data points in $D^\tau$ are occupied for buffer elements.

\noindent\textbf{Generating Labels.} Labels \( Y^\tau \) account for index changes after updates. For each $X^\tau_j$, the model output index \( I_{B^\tau_{emb}}[j] \) is corrected as \( Y^\tau_j = I_{B^\tau_{emb}}[j] + j \). Since $B^\tau$ is sorted, the target index is preceded by $j$ new update indices, ensuring accurate model training.

\subsection{Training Process}
\label{sec:models_training}
The training process operates in two phases: \textit{(1) updating neural networks for incoming updates and SigmaSigmoid modeling, (2) evolving the model from $M_l$ to $M_{l+1}$}. These phases improve system representation through the integration of $\mathcal{N}$ sigmoid functions within the neural network while maintaining prior data buffers via $\sum_{i=1}^\mathcal{N} \sigma(., A_i, \omega_i, \phi_i)$. The system triggers updates to the original index model when performance metrics indicate degradation.

\textbf{Neural Network Training Process.}
\label{subsec:nn_training}
This section describes the neural network training procedure (Figure~\ref{fig:Sig2Model-arch}\textcolor{blue!70!black}{(b)}). Two cost functions, $\mathcal{L}_{\Pi}$ and $\mathcal{L}_{\Psi}$, are computed based on the outputs of $NN_{\Pi}$ and $NN_{\Psi}$. As these operate in different dimensions of spaces, a specialized training strategy (Algorithm~\ref{alg:nn_training}) is used. Each model minimizes its cost independently, iterating until both $L_{\Pi}\leq\epsilon_\Pi$ and $L_{\Psi}\leq\epsilon_\Psi$ are met or the maximum iterations are reached.

The training process starts with a \textit{feed forward} step using data from the current buffer $B^\tau$ (Section~\ref{sec:dataprep}). Errors are calculated and backpropagation is performed only if $L_{\Pi}>\epsilon_\Pi$ or $L_{\Psi}>\epsilon_\Psi$. Priority is given to $NN_\Pi$ if its error exceeds the threshold. The weights are updated, and the cycle continues with $NN_\Psi$, using a batch size equal to the buffer size $\rho$. Training stops when both errors are below their respective thresholds or when the iteration limit is reached. Both cost functions have closed-form derivatives for efficient optimization.

\textbf{Learned Index Model Retraining.}
\label{sec:LI-retraining}
This section explains the \textit{Retrain Module} for the underlying LI model, shown in Figure~\ref{fig:retrain}\textcolor{blue!70!black}{(b)}, focusing on retraining signals, data preparation, and system re-initialization after retraining for new updates.

\textit{Retrain Signals.}
Signals for retraining come from \textit{NN Training} and the \textit{Inference Module} (Figure~\ref{fig:Sig2Model-arch}\textcolor{blue!70!black}{(b,c)}). During training, signals arise when backpropagation reaches its limit ($iterations \geq MaxIter$ in Algorithm~\ref{alg:nn_training}). During inference, signals occur when out-of-range searches show that $M'_l$ struggles to maintain accuracy, typically when the system reaches maximum capacity (Section~\ref{sec:nn_feasibility}), and the target not be found in the $\epsilon$-bounded range.

\textit{Data Preprocessing.}
We sort the data for training a new LI model. The output of $NN_{\Psi}$ generates a GMM optimized for the update workload distribution ($\mathcal{D}_{update}(k)\sim GMM(k,\Psi_\tau)$). After the buffer reaches capacity, $B^\tau$ is merged with $D^{\tau}$ to create $D^{\tau+1}$. Using $\mathcal{D}_{update}$, Equation~\ref{eq:nullifier} introduces gaps between data points. Training data for $M_{l+1}$ are formed by pairing the entries in $D^{\tau+1}$ with their new indices. This ensures that the model smoothly handles anticipated gaps without need for modifications.

\textit{Neural Networks Re-Initialize.}
\label{sec:nn_reinit}
Upon receiving update signals (Figure~\ref{fig:retrain}), the Control Unit initiates training for the new model $M_{l+1}$ while executing a systematic re-initialization protocol. The SigmaSigmoid parameters\begin{wrapfigure}{r}{0.45\textwidth}
\centering
\begin{minipage}{0.45\textwidth}

    \begin{subfigure}[b]{\textwidth}
        \centering
        \includegraphics[width=\linewidth]{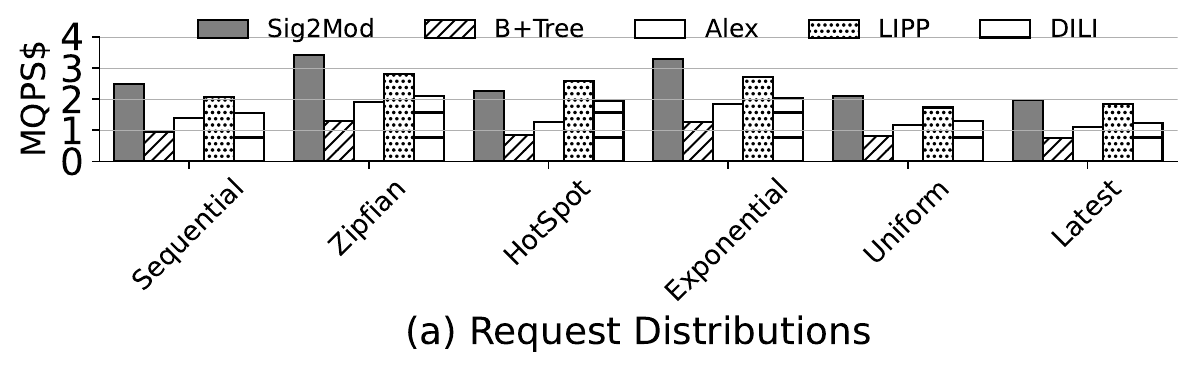}
        \vspace{-1.5em}
        \label{fig:perf:a}
    \end{subfigure}
    \vspace{-1.5em}
    \begin{subfigure}[b]{\linewidth}
        \centering
        \includegraphics[width=\linewidth]{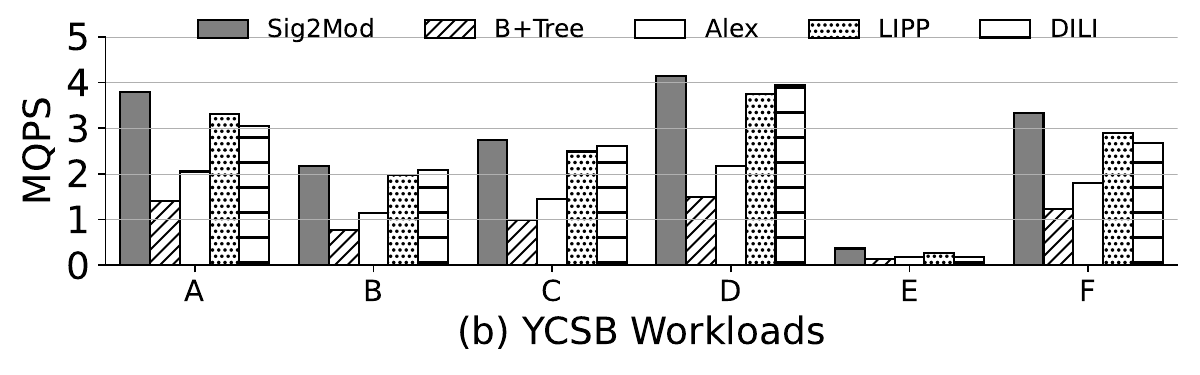}
        \label{fig:perf:b}
    \end{subfigure}
    \vspace{-1.5em}
    \begin{subfigure}[b]{\linewidth}
        \centering
        \includegraphics[width=\linewidth]{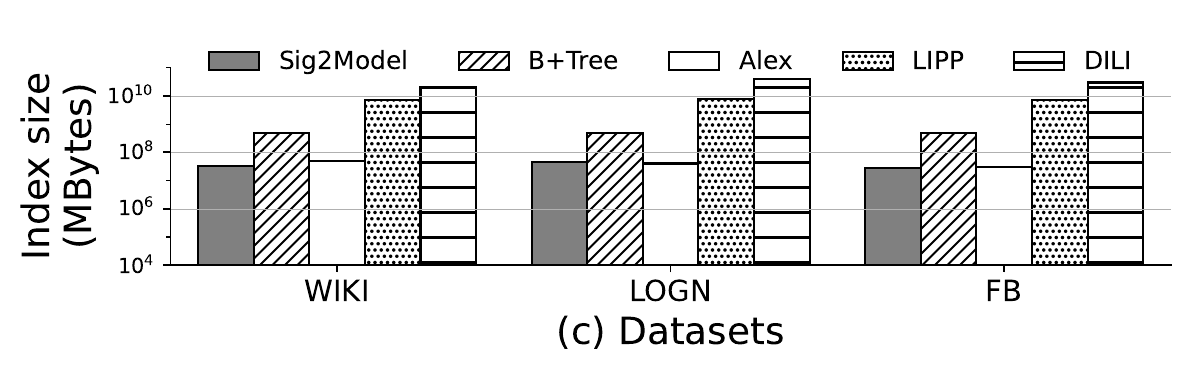}
        \label{fig:perf:c}
    \end{subfigure}
    \vspace{-1.5em}
    \caption{\textbf{Evaluation results. (a) QPS on request distributions. (b) YCSB workloads. (c) Index memory sizes.}}
    \label{fig:perf}
\end{minipage}
\vspace{-2em}
\end{wrapfigure} undergo complete reset to preserve the update distribution $\mathcal{D}_{\text{update}}$, with distinct handling procedures for each neural network component:
    $NN_{\Psi}$ remains unchanged, continuously adapting to incoming data streams. For $NN_{\Pi}$, the system neutralizes sigmoids from the previous model $M_l$ through four coordinated operations: (1) complete purging of the replay buffer, (2) zero-initialization of weights connecting to the output neurons $\{A_j\}_{j=1}^{\mathcal{N}}$, (3) recalibration of sigmoid parameters $\{\phi_j\}_{j=1}^{\mathcal{N}}$ using either $\mathcal{D}_{\text{update}}$ or uniform random sampling from the key space~\citep{heidari2020sampling}, and (4) uniform weight adjustment setting $\omega_j = 1$ for all $j \in [1,\mathcal{N}]$. 
Uniformly modifying weights to set $\omega_j=1\vert_{j=1}^\mathcal{N}$.  
The neural network $NN_{\mathcal{C}}$ remains unchanged, preserving the feedforward paths from $NN_{\mathcal{C}}$ to $NN_{\Psi}$ to maintain the GMM while resetting SigmaSigmoid for future updates.


\section{Experimental Evaluation}
\label{sec:evaluation}
We conduct a comprehensive evaluation of Sig2Model against state-of-the-art learned indexes (DILI, LIPP, and Alex). Sig2Model shows significant improvements across three key metrics: Up to $3\times$ higher QPS, $20\times$ lower training cost, and $1000\times$ reduced memory usage compared to baseline methods. These improvements are particularly significant in update-intensive scenarios that highlight Sig2Model's architectural advantages. Complete experimental configurations and additional results are available in Appendices~\ref{sec:exp_setup} and~\ref{app:eval} respectively.

\textbf{QPS Comparison.}
Table~\ref{tab:QPS} presents a detailed QPS comparison between Sig2Model (S2M), three baseline methods, and two ablated variants: S2M$-\Psi$ (without placeholder training) and S2M$-$B (without buffer component). We evaluate both single-threaded and multi-threaded configurations of Sig2Model variants. We provide the details on multi-threading in Appendix~\ref{app:parallelization}.
Sig2Model shows superior QPS scaling as update rates increase, achieving an $82$\% average improvement ($88\%$ on the multi-threaded version) over baseline methods. 

\begin{table}[t]
\caption{ \textbf{QPS comparisons with state-of-the-art methods. The numbers are in $10^6$.QPS (MQPS).}}
\label{tab:QPS}
\resizebox{\textwidth}{!}{%
\begin{tabular}{c|c|cc|cc|cc|c|c|c|c|cc|cc}
\hline
\multirow{2}{*}{\textbf{Workload}} & \multirow{2}{*}{\textbf{Dataset}} 
& \multicolumn{2}{c|}{\textbf{S2M}} 
& \multicolumn{2}{c|}{\textbf{S2M$-\Psi$}} 
& \multicolumn{2}{c|}{\textbf{S2M$-$B}} 
& \textbf{B+Tree} 
& \textbf{Alex} 
& \textbf{LIPP} 
& \textbf{DILI} 
& \multicolumn{2}{c|}{\textbf{Average}}  
& \multicolumn{2}{c}{\textbf{Max}} \\

& & S & M7 & S & M7 & S & M7 &   &  &  &  & S & M & S & M \\ 
\hline

\multirow{3}{*}{Read-Only} 
 & Wiki & \textbf{5.4} & 5.4  & \texttt{NA} & \texttt{NA}  & \texttt{NA} & \texttt{NA}  & 2.0  & 4.1  & 5.2  & \textbf{5.4}  
 & 50.5\% & 48.8\% & 2.68x & 2.68x \\ 
 & Logn & \textbf{6.3} & 6.2  & \texttt{NA} & \texttt{NA}  & \texttt{NA} & \texttt{NA}  & 1.9  & 6.2  & 6.0  & \textbf{6.3}  
 & 56.1\% & 55.4\% & 3.18x & 3.18x \\ 
 & FB & \textbf{4.5} & 4.5  & \texttt{NA} & \texttt{NA}  & \texttt{NA} & \texttt{NA}  & 2.0  & 2.3  & 4.3  & \textbf{4.5}  
 & 55.5\% & 51.6\% & 2.23x & 2.23x \\ 
\hline

\multirow{3}{*}{Read-Heavy} 
 & Wiki & \textbf{4.3} & 5.4  & 4.1  & 5.1  & 3.9  & 4.9  & 1.5  & 2.2  & 3.9  & 4.1  
 & 69.3\% & 75.8\% & 2.86x & 3.41x \\ 
 & Logn & \textbf{4.1} & 5.1  & 3.9  & 4.9  & 3.8  & 4.7  & 1.5  & 3.2  & 2.9  & 4.0  
 & 61.2\% & 70.1\% & 2.71x & 3.38x \\ 
 & FB & \textbf{2.6} & 3.3  & 2.5  & 3.1  & 2.4  & 3.0  & 1.3  & 1.1  & 2.3  & 2.5  
 & 62.8\% & 72.4\% & 2.28x & 2.91x \\ 
\hline

\multirow{3}{*}{Write-Heavy} 
 & Wiki & \textbf{3.6} & 4.5  & 3.3  & 4.2  & 3.1  & 4.0  & 1.3  & 1.9  & 2.8  & 3.1  
 & 73.8\% & 80.2\% & 2.76x & 3.45x \\ 
 & Logn & \textbf{3.9} & 4.8  & 3.7  & 4.6  & 3.4  & 4.3  & 1.3  & 3.2  & 2.9  & 3.6  
 & 76.2\% & 83.5\% & 3.00x & 3.70x \\ 
 & FB & \textbf{3.9} & 4.7  & 3.6  & 4.4  & 3.3  & 4.1  & 1.3  & \texttt{OOM}  & 2.1  & 3.5  
 & 82.2\% & 88.4\% & 3.00x & 3.61x \\ 
\hline

\multirow{3}{*}{Write-Only} 
 & Wiki & \textbf{2.9} & 2.9  & 2.7  & 2.8  & 2.4  & 2.4  & 1.3  & 1.4  & 2.2  & 2.4  
 & 76.8\% & 76.8\% & 2.23x & 2.23x \\ 
 & Logn & \textbf{3.1} & 3.2  & 3.0  & 3.0  & 2.6  & 2.6  & 1.2  & 2.8  & 2.2  & 2.5  
 & 58.4\% & 62.8\% & 2.58x & 2.66x \\ 
 & FB & \textbf{2.5} & 2.5  & 2.2  & 2.2  & 2.0  & 2.0  & 1.2  & \texttt{OOM}  & 1.7  & 2.1  
 & 68.1\% & 68.1\% & 2.04x & 2.04x \\ 
\hline
\end{tabular}%
}
\end{table}

In read-only workloads, Sig2Model matches the performance of its underlying RadixSpline implementation while outperforming B+Tree by $15$-$20$\% and achieving comparable results to DILI. The ablated variants S2M$-\Psi$ and S2M$-$B are excluded from these tests, as they specifically optimize update handling rather than read performance. For read-heavy workloads with 10\% updates, Sig2Model achieves $2.7\times$ higher QPS on the Logn dataset, with multi-threading providing $3.4\times$ speedup. As update rates increase, Sig2Model maintains a consistent 60\% average QPS advantage over competing methods.
Write-heavy workload tests reveal particularly strong performance, with Sig2Model processing $4.7$ MQPS on the Facebook dataset, significantly outperforming B+Tree ($1.3$M), LIPP ($2.1$M), and DILI ($3.5$M). Alex also fails in several experiments due to its known memory constraints~\citep{yangalgorithmic}.
In write-only scenarios, Sig2Model achieves $67.7$-$74.5$\% higher QPS than baselines through its optimized update handling mechanisms.

\textbf{Various Request Distribution.}
Figure~\ref{fig:perf}\textcolor{blue!70!black}{(a)} demonstrates Sig2Model's consistent performance across six different request distributions for write-heavy workloads using the Wiki dataset~\citep{WikiTS}. The system maintains QPS improvements of $2.61\times$ over B+Tree, $1.78\times$ over Alex, $1.15\times$ over LIPP, and $1.54\times$ over DILI, showing a robust adaptation to varying access patterns.

\textbf{YCSB Benchmark Results.}
Figure~\ref{fig:perf}\textcolor{blue!70!black}{(b)} shows Sig2Model achieves consistently superior QPS across all six YCSB workloads~\citep{cooper2010benchmarking}. For read-intensive workloads (YCSB B, C, and D), the system delivers $2.1$-$4.1$ MQPS, outperforming baseline methods by $1.0$-$2.8\times$. In balanced workloads (YCSB A and F), Sig2Model maintains strong performance at $3.3$-$3.8$ MQPS while sustaining a $55.1\%$ average QPS advantage. The system particularly excels in scan-heavy operations (YCSB E), where its efficient range query processing yields $55\%$ higher QPS than DILI and significantly outperforms other baselines that suffer from substantial re-traversal overhead. 

\textbf{Index Memory Size.}
\label{sec:eval:mem}
Figure~\ref{fig:perf}\textcolor{blue!70!black}{(c)} shows the memory usage of the Sig2Model and the baselines, including memory for fine-tuning neural networks in the Sig2Model. Sig2Model uses up to $1000\times$ less memory than DILI, due to its efficient placeholder placement and minimized buffer additions. In contrast, DILI and LIPP consume more memory due to new leaf nodes and empty slots created during conflicts. Alex's in-place placeholder strategy is less efficient as it does not consider the incoming workload distribution, leading to slightly higher memory usage than Sig2Model.


\textbf{Core Components Ablation Study.}
We analyze the individual contributions of the three core components of Sig2Model on the Wiki dataset under write-heavy workloads: (1) buffer management (\textbf{B}), (2) index approximation network using SigmaSigmoid boosting ($\Pi$), and (3) update workload training using GMM ($\Psi$). Table~\ref{tab:ablation} presents QPS measurements for all combinations of components.
\begin{wrapfigure}{r}{0.45\textwidth}
\centering
\hspace{-7pt}
\begin{minipage}{0.45\textwidth}
    \centering
    \begin{subfigure}[b]{\linewidth}
        \includegraphics[width=\linewidth]{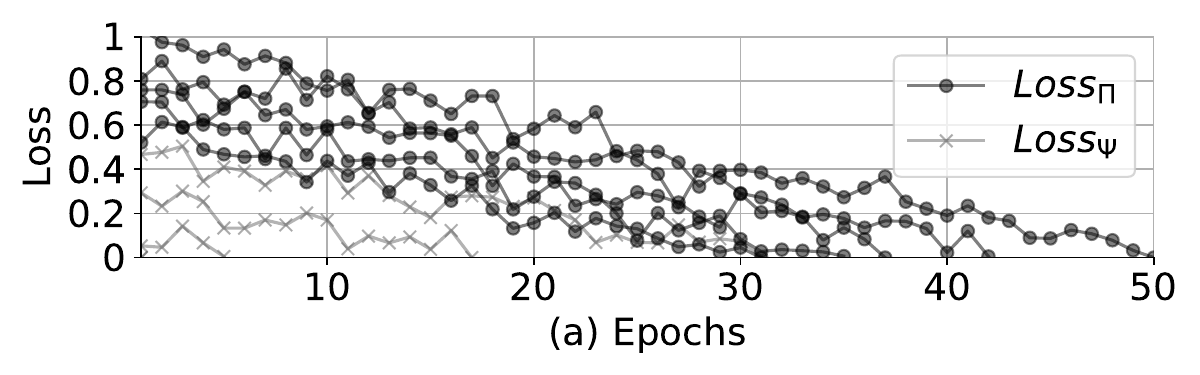}
    \end{subfigure}
    \begin{subfigure}[b]{\linewidth}
        \includegraphics[width=\linewidth]{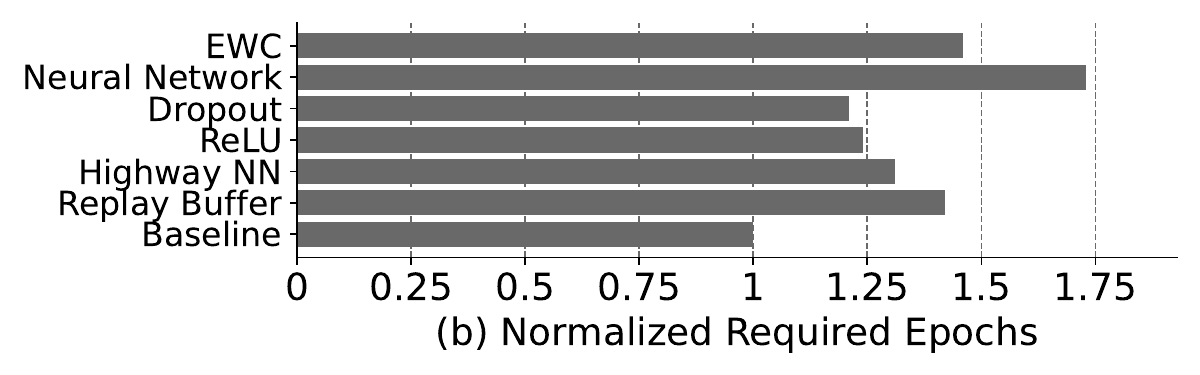}
    \end{subfigure}
    \begin{subfigure}[b]{\linewidth}
        \includegraphics[width=\linewidth]{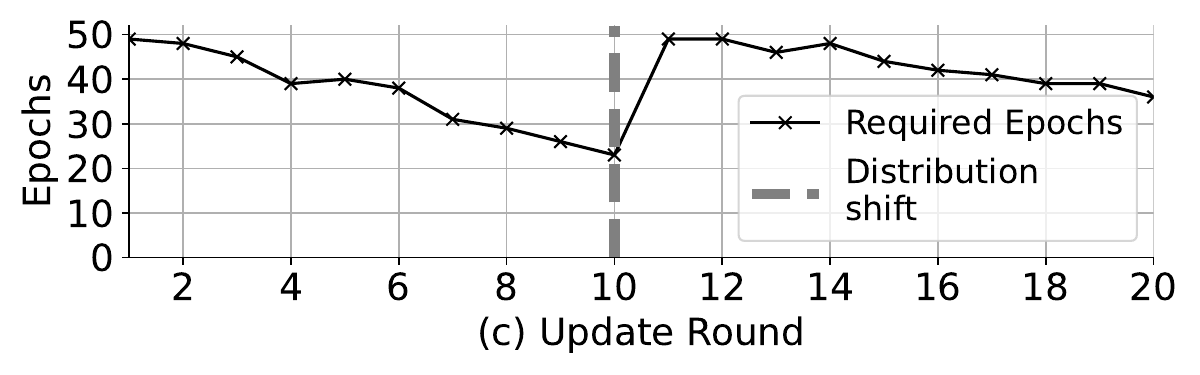}
    \end{subfigure}
    \caption{\textbf{(a) Loss curves for $NN_\Pi$ and $NN_\Psi$. (b) Ablation study on $NN_c$. (c) Impact of distribution shift on required epochs.}}
    \label{fig:nnres}
\end{minipage}
\end{wrapfigure} The baseline RadixSpline with full retraining (\textit{None}) achieves only 0.03 MQPS. Individual components show varying effectiveness: $\Psi$ (2.0 MQPS, comparable to Alex), $\Pi$ ($2.1$ MQPS), and \textbf{B} ($1.4$ MQPS). The combinations of two components show synergistic effects, with \textbf{B}+$\Pi$ achieving $3.0$ MQPS and $\Pi$+$\Psi$ reaching $2.9$ MQPS (surpassing LIPP). The complete Sig2Model configuration (\textbf{B}+$\Pi$+$\Psi$) delivers optimal performance at $3.6$ MQPS, consistent with our full system results in Table~\ref{tab:QPS}. 

\begin{table}[t]
\footnotesize
\centering
\caption{\textbf{Components ablation study}}
\label{tab:ablation}
\begin{tabular}{lcccccccc}
\hline
Components & None & \textbf{B} & $\Psi$ & $\Pi$ & \textbf{B}+$\Psi$ & $\Pi$+$\Psi$ & \textbf{B}+$\Pi$ & Full \\
\hline
MQPS & 0.03 & 1.4 & 2.0 & 2.1 & 2.3 & 2.9 & 3.0 & 3.6 \\
\hline
\end{tabular}
\end{table}

\textbf{Training Loss Curves.}
\label{sec:loss-curves}
Figure~\ref{fig:nnres}\textcolor{blue!70!black}{(a)} shows the loss curves for \(NN_\Pi\) and \(NN_\Psi\) during ten rounds of fine-tuning on the Wiki dataset using the read-heavy workload. Both networks train smoothly and converge to \(\epsilon_\Pi\) and \(\epsilon_\Psi\). Initially, \(NN_\Pi\) requires about $50$ epochs to converge, but this decreases to $29$ epochs in later updates due to memory retained from \(NN_C\). Similarly, \(NN_\Psi\) requires fewer epochs in subsequent updates, as the update distributions remain consistent, and the GMM parameters (\(\Psi\)) do not require significant changes. After four updates with stable workload, the initial loss for \(NN_\Psi\) falls below \(\epsilon_\Psi\), eliminating the
 need for further iterations.


\textbf{$NN_C$ Ablation Study.}
\label{sec:ablation}
Figure~\ref{fig:nnres}\textcolor{blue!70!black}{(b)} analyze the contributions of $NN_C$ components to training efficiency by measuring the increase in epochs required to reach \(\epsilon_\Pi\) and \(\epsilon_\Psi\) when components are removed. Removing the reply buffer, highway~NN, ReLU, Dropout, and EWC increase the epochs by $1.42\times$, $1.31\times$, $1.24\times$, $1.21\times$, and $1.46\times$, respectively. Simplifying fully connected layers from 3 to 1 results in a $1.73\times$ increase in epochs.

\textbf{Retrain Cost Analysis.}
Figure~\ref{fig:eval-retrain} shows retraining costs for Sig2Model and baselines under a write-only workload on the Wiki dataset. Sig2Model achieves up to
$2.20\times$ reduction in the total number\begin{wrapfigure}{r}{0.45\columnwidth}
\vspace{-.1em}
    \centering
    \includegraphics[width=\linewidth]{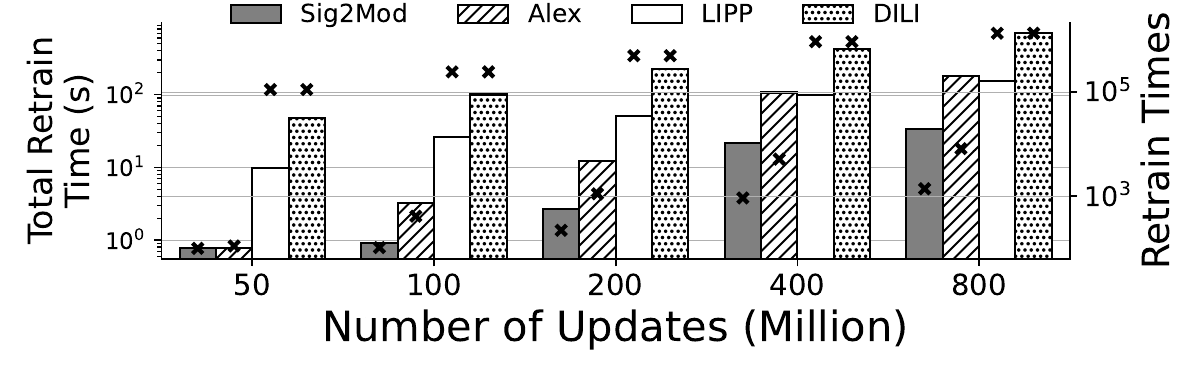} 
    \vspace{-2em}
  \caption{ \textbf{Retrain cost}}
  \label{fig:eval-retrain}
  \vspace{-1em}
\end{wrapfigure} of retrainings and a $20.58\times$ decrease in the total duration of retraining compared to the baselines.

\textbf{Limitations.} \textit{(1) Distribution Shift}. Since the parameters in SigmaSigmoids and GMM are trained on an initial data distribution, any shift in the distribution requires additional training time for the neural network to adapt. For example, as shown in Figure~\ref{fig:nnres}\textcolor{blue!70!black}{(c)}, the number of training epochs spikes during the 11th round when the data distribution changes from Zipfian to Exponential. \textit{(2) Fixed SigmaSigmoid Capacity ($\mathcal{N}$).} In our experiments, we use a constant value for $\mathcal{N}$ based on hyperparameter sensitivity test~\citep{NEURIPS2024_e1cadf5f}(See Appendix~\ref{app:sensivity}). However, this value can be dynamically adjusted based on the observed data distribution and workload to improve further improve the performance.

\section{Conclusion}
Sig2Model presents a mathematically rigorous framework for efficient learned index updates, directly addressing the critical retraining bottleneck through innovative model adaptation techniques. While index updates remain inherently non-local operations, our approach guarantees bounded sub-optimality. By employing a boosting methodology, Sig2Model demonstrates that an ensemble of weak approximators can progressively converge toward an optimal update policy—eliminating the need for expensive full retraining cycles. This fundamental advancement not only maintains index effectiveness during updates but also opens new research directions for sustainable learned index architectures. Our work establishes a foundation for future systems that can adapt dynamically to workload changes while preserving theoretical guarantees.




\nocite{*}
\bibliography{iclr2026_conference}
\bibliographystyle{iclr2026_conference}

\appendix

\startcontents

\printcontents{}{1}{\section*{Appendix Contents}}

\newpage

\newcommand{\tabincell}[2]{\begin{tabular}{@{}#1@{}}#2\end{tabular}}
\section{Table of Notations}
\label{app:notations}

\begin{table}[H]
   { \footnotesize \centering
    \caption{Notations}\label{tab:notations}
    \begin{tabular}{c|p{0.8\textwidth}}
        \hline 
        \tabincell{l}{$D^\tau$} & \tabincell{l}{Sorted keys after $\tau^{\text{th}}$ update with size $s_\tau$} \\ \hline
        \tabincell{l}{$D^\tau_{emb}$} & \tabincell{l}{Embedded keys of $D^\tau$ with vectors of size $n$} \\ \hline
        \tabincell{l}{$B^\tau$} &\tabincell{l}{The single buffer for entire index structure at stage $\tau$}  \\ \hline
        \tabincell{l}{$\rho$} &\tabincell{l}{The size of the buffer}  \\ \hline
        \tabincell{l}{$RB^\tau$} &\tabincell{l}{The neural network replay buffer at stage $\tau$}  \\ \hline
         \tabincell{l}{$\kappa$} &\tabincell{l}{The size of $RB^\tau$}  \\ \hline
        \tabincell{l}{$X^\tau_i$} & \tabincell{l}{ $i^{\text{th}}$ input to the neural network at stage $\tau$} \\ \hline
        \tabincell{l}{$Y^\tau_i$} & \tabincell{l}{Corresponding label of $X^\tau_i$ } \\ \hline
        \tabincell{l}{$A$} & \tabincell{l}{Amplitude of Sigmoid function} \\ \hline
        \tabincell{l}{$\omega$} & \tabincell{l}{Slope of Sigmoid function} \\ \hline
        \tabincell{l}{$\phi$} & \tabincell{l}{Center of Sigmoid function} \\ \hline
        \tabincell{l}{$\sigma(x, A, \omega, \phi)$} & \tabincell{l}{Parametrized Sigmoid function} \\ \hline
        \tabincell{l}{$\mathcal{N}$} & \tabincell{l}{Maximum number of Sigmoids} \\ \hline
         \tabincell{l}{$\mathcal{D}_{update}$} &\tabincell{l}{The distribution of incoming updates}  \\ \hline
        \tabincell{l}{$U$} & \tabincell{l}{Set of new updates drawn from $\mathcal{D}_{update}$} \\ \hline
        \tabincell{l}{$M_l(.)$} & \tabincell{l}{The LI model after $l^{\text{th}}$ retrain with maximum error $E$} \\ \hline
        \tabincell{l}{$M'_l(., \Pi_\tau)$} &\tabincell{l}{The adjusted model of $M_l(.)$ with parameter $\Pi_\tau$ at stage $\tau$}  \\ \hline
        \tabincell{l}{$E_l$} & \tabincell{l}{The maximum estimation error of $M_l(.)$} \\ \hline
         \tabincell{l}{$\lambda$} &\tabincell{l}{Sampling fraction}  \\ \hline
          \tabincell{l}{$\Pi$} &\tabincell{l}{Set of sigma sigmoid parameters}  \\ \hline
          \tabincell{l}{$\Psi$} &\tabincell{l}{Set of GMM parameters}  \\ \hline
          \tabincell{l}{$K$} &\tabincell{l}{Number of GMM's kernel}  \\ \hline
         \tabincell{l}{$\mathbb{M}$} &\tabincell{l}{The hypothesis space of Sigma-Sigmoid based models, all\\ configurations of $\mathcal{N}$ sigmoids, with parameters defined by $\Pi$. }  \\ \hline
         \tabincell{l}{$\Gamma(k)$} &\tabincell{l}{Determine size gap before given $k$ }  \\ \hline
         \tabincell{l}{$G$} &\tabincell{l}{Maximum gap between continuous keys}  \\ \hline
         \tabincell{l}{$d$} &\tabincell{l}{Minimum possible distance between keys}  \\ \hline
         \tabincell{l}{$\alpha$} &\tabincell{l}{Model prediction bias factor}  \\ \hline
         \tabincell{l}{$\beta$} &\tabincell{l}{Interference factor}  \\ \hline
         \tabincell{l}{$\delta$} &\tabincell{l}{Confidence level of initial clustering}  \\ \hline
        \tabincell{l}{$E_\Pi$} &\tabincell{l}{Prediction confusion parameter}  \\ \hline
         \tabincell{l}{$\epsilon_{\Pi}/\epsilon_{\Psi}$} &\tabincell{l}{Error threshold for Sigma-Sigmoid/Incoming updates model}  \\ \hline
         \tabincell{l}{$s$} &\tabincell{l}{Size of data}  \\ \hline
         
    \end{tabular}}
\end{table}

\section{Preliminaries}
\label{sec:preliminaries}
\subsection{Learned Index (LI)}
\label{sec:back:li}
The learned indexes~\citep{ding2020alex, chatterjee2024limousine, li2020lisa, tang2020xindex, kipf2020radixspline, kim2024accelerating, lan2024fully} aim to improve the efficiency of data retrieval in database systems by using machine learning models that map keys to their locations. 
The traditional learned index employs ensemble learning and hierarchical model organization. Starting from the root node and progressing downward, the model predicts the subsequent layers to use for a query key $k$ based on $F(k) \times s$, where $s$ is the number of keys, and $F$ is the cumulative distribution function (CDF) that estimates the probability $p(x \le k)$. Given the overhead of training and inference in complex models, most learned indexes utilize piecewise linear models to fit the CDF. Querying involves predicting the key's position using $pos = a \times k + b$ with a maximum error $e$, where $a$ and $b$ are learned parameters, and $e$ is for the final search to locate the target key. 

\noindent
\textbf{Updatable Learned Index.}
Learned indexes require a fixed record distribution, making updates difficult (Section~\ref{sec:intro}). Solutions for up-datable learned indexes include: \textit{ (i) delta buffer, (ii) in place, (iii) hybrid structures}~\cite{10.14778/3594512.3594528, 10.1145/3626752}. \textit{Delta buffer} methods (e.g., LIPP) use buffers to postpone updates, but merging occurs when buffers overflow. \textit{In-place} approaches (e.g., Alex) reserve placeholders for updates but may cause inefficient searches when offsets fill up. \textit{Hybrid} methods balance efficiency and speed by combining buffers and placeholders. DILI, a hybrid solution, uses a tree structure for level-wise lookups, but updates increase the tree height over time.

\subsection{Model Adjustment with Sigmoids}
\label{sec:back:sigmoids}

As discussed in Section~\ref{sec:intro}, sigmoid functions enable smooth model adjustments by treating small updates as gradual changes, reducing the frequency of retraining. The sigmoid function $\sigma(x) = \frac{1}{1+e^{-x}}$ creates a "S"-shaped curve, commonly used in machine learning. 

\noindent When combined with another function, for example, $f(x) + \sigma(x-\phi)$, it introduces a smooth step-like transition near $\phi$. This property makes sigmoids ideal for approximating stepwise behaviors. 

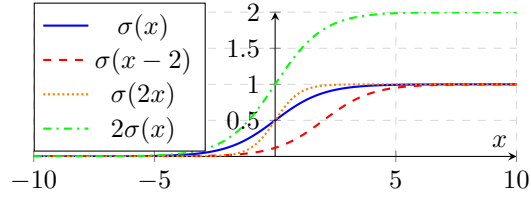
\begin{figure}
  \centering
  \begin{tikzpicture}
    \begin{axis}[
      axis lines = middle,
      xlabel = $x$,
      domain=-10:10,
      samples=100,
      width=8cm,
      height=3.5cm,
      grid = major,
      grid style={dashed, gray!30},
      legend style={at={(0,0.55)},anchor=west}
    ]
    \addplot[blue, thick] {1 / (1 + exp(-x))};
    \addlegendentry{$\sigma(x)$}
    \addplot[red, thick, dashed] {1 / (1 + exp(-(x-2)))};
    \addlegendentry{$\sigma(x-2)$}
    \addplot[orange, thick, densely dotted] {1 / (1 + exp(-2*(x)))};
    \addlegendentry{$\sigma(2x)$}
    \addplot[green, thick, dash dot] {2*(1 / (1 + exp(-(x))))};
    \addlegendentry{$2\sigma(x)$}
    \end{axis}
  \end{tikzpicture}
  \caption{Sigmoid function $\sigma(x)$ transformations.}
  \label{fig:sigmoid}
\end{figure}

\noindent The generalized sigmoid, $\sigma(x, A, \omega, \phi) = \frac{A}{1 + e^{-\omega(x - \phi)}} $, adds flexibility to control amplitude, slope, and center, enabling broader behavior modeling in learned indexes.

\subsection{Gaussian Mixture Model for Distribution Modeling}
\label{sec:gmm_intro}

A \textit{Gaussian Mixture Model} (GMM) assumes data are generated from a mixture of Gaussian distributions, each defined by a mean and variance. GMMs are flexible and well-suited for modeling complex, multimodal key distributions in real-world workloads. The GMM is mathematically expressed as

\vspace{-0.5em}
\begin{equation}
    \text{GMM}(x, \Psi) = p(x) = \sum_{i=1}^{K} \pi_i N(x \mid \mu_i, \Sigma_i)
    \label{eq:cost_gmm}
\end{equation}
\vspace{-0.5em}

\noindent
where $K$ is the number of Gaussian components (kernels) and $\Psi = \{(\pi_i, \mu_i, \sigma_i)\}_{i=1}^K$ represents the GMM parameters, $\pi_i$ is the weight of the $i$-th component, and $N(x \mid \mu_i, \Sigma_i)$ is the Gaussian distribution with mean $\mu_i$ and covariance $\Sigma_i$ ($\sum_{i=1}^{K} \pi_i = 1$).

In learned indexes, GMMs predict update distributions. Each Gaussian component represents a key cluster, enabling the precise placement of placeholders for future updates.

\section{Motivational Example}
\label{app:motivational_example}
Consider the LI model $M_0(k) = 2.5k + 1.5$ with a prediction error of $E = 0.25$ in the keys domain $D^0 = \{-0.2, 0.3, 0.59, 0.91\}$. The predicted indices for the elements in $D^0$ yield $I_{D^0} = M_0(D^0) = \{1, 2.25, 2.975, 3.775\}$. Consequently, for each $I \in I_{D^0}$, the range of search positions is given as $I \pm E$. For example, for the key $k = 0.59$ where the prediction is $M_0(0.59) = 2.975$, the search range is $2.975 \pm 0.25$, resulting in the interval $[2.725, 3.225]$. Considering that positions are integers, we only search for positions $3$ and $4$.

\noindent Consider an incoming update $u_1$ with a key value of $u_1 = 0.46$. This update alters the domain, $D^1 = \{-0.2, 0.3, 0.46, 0.59, 0.91\}$, while the index set $I_{D^1}=\{1, 2.25, 2.65, 2.975, 3.775\}$. The update does not affect the indices for $-0.2$ and $0.3$, so the model $M_0$ remains applicable. However, for the elements $2.975$ and $3.775$, the indices are outdated. In essence, the predictions for elements $\le u_1$ remain unchanged, while elements in the domain that are $\ge u_1$ are impacted. Furthermore, it is evident that elements $> u_1$ experience an exact effect of $1$, implying that incrementing their previous prediction by one aligns their indices with the new index. This insight suggests that by implementing a step-like adjustment to $M_0$, we can avoid training a new model $M_1$ across $D^1$. By adding $\frac{1}{1 + e^{-48.3(k - 0.47)}}$ to $M_0$, a new model is generated that produces adjusted outputs $M'_0(k) = M_0(k) + \left(\frac{1}{1+e^{-48.3\left(k-0.47\right)}}\right)$. Then, $I_{M'_0(D^1)}=\{1, 2.25, 3.03, 3.971, 4.775\}$ which using $E$ gives us the correct index for all the keys in $D^1$. 
\noindent Given the second update $u_2=0.14$, $D^2 = \{-0.2, 0.14, 0.3, 0.46, 0.59, 0.91\}$, you might choose to apply another step function, a second sigmoid, expressed as $M''_0(k) = M'_0(k) +\left(\frac{1}{1+e^{-98.7\left(k-0.15\right)}}\right)$, or you can fully utilize the capacity of the initial sigmoid applied in $M'_0$, formulated as $M'''_0(k) = M_0(k) + \left(\frac{2}{1+e^{-12.6\left(x-0.33\right)}}\right)$. In line with the \textit{Oakum razor principle}, we choose $M'''_0$ instead of $M''_0$ because it is simpler and needs less memory. However, the minimal memory usage isn't always feasible, as it relies on the interval between (i.e., distribution) updates. Consequently, the suggested model should account for these intervals when determining the count of step functions (i.e., sigmoids). These approximations are shown in Figure~\ref{fig:mot_example}. This example demonstrates that by employing an appropriate step function, we can modify the LI model without the need to retrain completely on new data. In subsequent sections, we expand this concept and establish a formal learning framework to train the step function.

\begin{figure}[]
  \centering
  \begin{tikzpicture}
    \begin{axis}[
      axis lines = middle,
      xlabel = $k$,
      ylabel = $Index$,
      domain=-0.22:1.01,
      samples=100,
      width=8cm,
      height=3.5cm,
      grid = major,
      grid style={dashed, gray!30},
      legend style={at={(-.21,0.65)},anchor=west},
      xtick={-0.2, 0.14, 0.3, 0.46, 0.59, 0.91}, 
      ytick={1, 2, 3, 4, 5, 6}, 
    ]
    \addplot[blue, thick] {2.5*x + 1.475};
    \addlegendentry{$M_0(k)$}
    
    \addplot[red, thick, dashed] {2.5*x + 1.52 + (1/(1 + exp(-48.3*(x- 0.47))))};
    \addlegendentry{$M'_0(k)$}
    
    \addplot[orange, thick, densely dotted] {2.5*x + 1.48 + (1/(1 + exp(-48.3*(x- 0.47))))+ (1/(1 + exp(-98.7*(x- 0.15))))};
    \addlegendentry{$M''_0(k)$}
    
    \addplot[green, thick, dash dot] {2.5*x + 1.48 + (2/(1 + exp(-12.6*(x- 0.33))))};
    \addlegendentry{$M'''_0(k)$}
    \end{axis}
  \end{tikzpicture}
  \caption{Adjusting the LI model upon receiving updates by employing the sigmoid as a step function.}
  \label{fig:mot_example}
\end{figure}
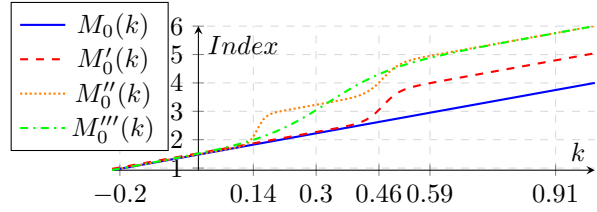

\section{Canonical Neural Network ($NN_C$)}
\label{app:nnc}

This section describes the shared weights of $NN_\mathcal{C}$, a core component of the neural networks linked to system parameters and update workload distribution. $NN_\mathcal{C}$ processes embedded data to build knowledge memory and representations for $NN_\Pi$ and $NN_\Psi$. The next section details the architecture of $NN_\mathcal{C}$, followed by an explanation of data preparation for its input and the training process for these networks during initialization and mid-development. 

The multi-layer neural network \(NN_{\mathcal{C}}\) processes sequential data batches for continuous learning. It employs highway networks, ReLU activations, and dropout layers to extract patterns and prevent overfitting. To address catastrophic forgetting~\cite{kemker2018measuring}, two strategies are used: \emph{Replay Buffer}~\cite{di2022analysis} and \emph{Elastic Weight Consolidation (EWC)}~\cite{kirkpatrick2017overcoming}. The network outputs feed into two subnets, \(NN_{\Psi}\) and \(NN_{\Pi}\), each handling specific tasks. Key components of \(NN_{\mathcal{C}}\) include:

\noindent
\textbf{Replay Buffer ($RB$).} The replay buffer stores observed data ($D^0$, $B^1$, $B^2$, $\dots$, $B^{\tau-1}$) as 4-ary tuples: key, representation, global index, and age. Algorithm~\ref{alg:rb_index_update} updates global indexes for $RB^{\tau-1}$ upon new data $B^\tau$, ensuring $k_{MIN}$ and $k_{MAX}$ are preserved the updated . Algorithm~\ref{alg:rb_update} refreshes the replay buffer after neural networks by adding new data or replacing the oldest entries probabilistically when capacity $\kappa$ is reached.

\noindent
\textbf{Highway Neural Network (2 layers).} Input data (new and replayed) is processed through a 2-layer highway network to mitigate vanishing gradients, similar to residual connections~\cite{srivastava2015highway}. This approach preserves essential features while learning new ones~\cite{wang2024comprehensive}.

\noindent
\textbf{Nonlinear Activation (ReLU).} ReLU activation introduces non-linearity essential for capturing complex relationships and avoids vanishing gradients during backpropagation~\cite{li2017convergence, ribeiro2020beyond}.

\noindent
\textbf{Dropout Layer.} Dropout randomly disables neurons during training to prevent overfitting and improve generalization~\cite{srivastava2014dropout, baldi2013understanding}.

\noindent
\textbf{Neural Network (3 layers).} A 3-layer network improves the understanding of complex data relationships, helping generalization between tasks.

\noindent
\textbf{EWC Regularization.} Elastic Weight Consolidation (EWC)~\cite{liu2020incdet} protects critical parameters during training, mitigating catastrophic forgetting.

\noindent
\textbf{Fully Connected to Subnets.} The final layer connects to subnets \(NN_{\Psi}\) and \(NN_{\Pi}\), each handling specific tasks, improving performance in multi-task scenarios.

\section{Algorithms}
\label{app:alg}

\begin{algorithm}[H]
  \caption{Greedy Initialization of GMM Parameters}
  \begin{algorithmic}[1]
    \State \textbf{Input:} Data $D^0$ with Size $s_0$, Confidence Level $\delta$
    \State \textbf{Output:} GMM Parameters $\Psi_0$, Number of Kernels $K$
    \State $\Psi_0 \gets \emptyset$, $K \gets 0$
    \While{$|D^0| \ge 2$}
      \State Select the two smallest elements: $k_1, k_2 \in D^0$
      \State $T \gets \{k_1, k_2\}$ , $D^0 \gets D^0 \setminus \{k_1, k_2\}$
      \State Construct a Gaussian $N_T = N(\text{avg}(T), \text{samplevar}(T))$
      \For{each element $k \in D^0$}
        \If{$P(k \sim N_T) > \delta$}
          \State $T \gets T \cup \{k\}$
          \State Update $N_T \gets N(\text{avg}(T), \text{samplevar}(T))$
          \State $D^0 \gets D^0 \setminus \{k\}$
        \Else
          \State $\Psi_0 \gets \Psi_0 \cup \bigg(\frac{|T|}{s_0}, \text{avg}(T), \text{samplevar}(T)\bigg)$
          \State $K \gets K + 1$
        \EndIf
      \EndFor
    \EndWhile
    \If{$|D^0| \neq 0$}
      \State Add remaining elements to $T$
      \State Update the last normal distribution and its coefficient
    \EndIf
  \end{algorithmic}
  \label{alg:init_gmm}
\end{algorithm}

\begin{algorithm}[H]
  \caption{$ReindexRB()$ ReIndex Replay Buffer Indexes }
  \begin{algorithmic}[1]
    \State \textbf{Input:} Updates Buffer $B^\tau$, Replay Buffer $RB^{\tau-1}=\{(k_1,r_1,I_1,a_1),\dots,(k_\kappa,r_\kappa,I_\kappa,a_\kappa)\}$
    \State \textbf{Output:} Reindexed Replay Buffer $RB^{\tau-1}$, Indexed Buffer $B^\tau_{idx}$
    \State $cntr \leftarrow 1, B^\tau_{idx}\gets\emptyset$
    \For{$i = 1$ to $\kappa$}
      \While{$cntr \le |B^\tau|$ and $B^\tau[cntr].key < k_i$}
        \State $Add((B^\tau[cntr].key, B^\tau[cntr]_{emb}, I_i~+~cntr, 0))$ to $B^\tau_{idx}$ 
        \State $cntr \leftarrow cntr + 1$
      \EndWhile
      \State $I_i \leftarrow I_i + cntr$
      \State $a_i \leftarrow a_i + 1$  \Comment{Increment age excluding $k_{MIN}$ and $k_{MAX}$}
    \EndFor
        \While{$cntr \le \rho$}
        \State $Add((B^\tau[cntr].key, B^\tau[cntr]_{emb}, I_m + cntr, 0))$ to $B^\tau_{idx}$
        \State $cntr \leftarrow cntr + 1$
        \EndWhile
  \end{algorithmic}
  \label{alg:rb_index_update}
\end{algorithm}

\begin{algorithm}[H]
  \caption{$UpdateRB()$ Update Replay Buffer from $\tau-1$ to $\tau$}
  \begin{algorithmic}[1]
    \State \textbf{Input:} Indexed Buffer $B^\tau_{idx}$, Replay Buffer $RB^{\tau-1}$, Sampling Fraction $\lambda$, Replay Buffer Size $\kappa$
    \State \textbf{Output:} New Replay Buffer $RB^\tau$
    \For{each entry $e$ in $B^\tau_{idx}$ with probability $\lambda$}
      \If{$|RB^{\tau-1}| < \kappa$}
        \State Add $e$ to $RB^{\tau-1}$  
      \Else
        \State Replace oldest element in $RB^{\tau-1}$ with $e$
      \EndIf
    \EndFor
    \State $RB^\tau \leftarrow SortByKey(RB^{\tau-1})$
  \end{algorithmic}
  \label{alg:rb_update}
\end{algorithm}

\section{Proof of Theorem}
\label{app:theorem}

We provide the proof for Theorem~\ref{theorem1} in this section.
\begin{proof}
\noindent
\textbf{Step 1 (Discrete Spacing).} \ 
By hypothesis, for any two consecutive keys $k_i < k_{i+1}$, the gap 
satisfies 
\(
1\le\bigl\lvert M'(k_{i+1},\Pi) - M'(k_i,\Pi)\bigr\rvert\le 2.
\)
Hence, if we list keys $k$ in ascending order, each key $k$ can have 
only a small number of ``neighboring keys'' $k_{\pm}$ for which 
$\lvert M'(k,\Pi) - M'(k_{\pm},\Pi)\rvert < 2$. 
In particular, for $E_\Pi$ chosen in the range $[1,2)$, only the same key $k$ 
or its one or two immediate neighbors in the sorted order of keys 
can satisfy 
$\lvert M'(k,\Pi) - M'(k_{\pm},\Pi)\rvert < E_\Pi$.

\medskip
\noindent
\textbf{Step 2 (Choose $E_\Pi \ge 1$).} \
We pick $E_\Pi \ge 1$. 
Because each key $k$ has at most $O(1)$ neighboring keys whose model outputs 
lie within $E_\Pi$, the event 
\(
\bigl\lvert M'(k,\Pi) - M'(u,\Pi)\bigr\rvert < E_\Pi
\)
can only occur if $u$ is either $k$ itself or one of those few neighbors.
Let $\text{Neighbor}(k,E_\Pi)$ denote this (small) set of possible neighbors of $k$. 
Then
\(
\bigl\lvert M'(k,\Pi) - M'(u,\Pi)\bigr\rvert < E_\Pi
\quad\Longrightarrow\quad
u \;\in\; \{k\} \;\cup\; \text{Neighbor}(k,E_\Pi).
\)

\medskip
\noindent
\textbf{Step 3 (Bound the Probability).} \
Since the set $\{k\} \cup \text{Neighbor}(k,E_\Pi)$ is small and fixed for each $k$, 
the probability that a random $u \sim \mathcal{D}_{\text{update}}$ lands 
in that set is bounded above by some function of its measure or 
cardinality. Specifically,
\[
    \mathbb{P}\Bigl[
      \bigl\lvert M'(k,\Pi) - M'(u,\Pi)\bigr\rvert < E_\Pi
    \Bigr]
    \;\;\le\;\; 
    \mathbb{P}\Bigl[u \in \{k\} \cup \text{Neighbor}(k,E_\Pi)\Bigr].
\]
Under assumption bounded $\mathcal{D}_{\text{update}}$, there exist a finite $\beta$ and 
we can make this probability $\le \beta$ 
(e.g., $\beta$ can be chosen based on density). 

\medskip
\noindent
Thus, there is an $E_\Pi \ge 1$ such that 
\(
    \mathbb{P}\big[
      \bigl\lvert M'(k,\Pi) - M'(u,\Pi)\bigr\rvert < E_\Pi
    \big] 
    \;\le\; \beta,
\)
completing the proof.
\end{proof}

\section{Theoretical Analysis}
\label{sec:theory}

This section addresses two theoretical aspects of Sig2Model: (1) how individual updates affect $\omega$ in the sigmoid approximation and (2) the neural network's feasibility in achieving optimal sigmoid-based approximation. We define $\epsilon$ as the maximum error within $M$'s domain caused by the sigmoids' gradual transition from $0$ to their maximum $A$. Lastly, we analyze the time complexity of Sig2Model's main algorithms and update process.

\subsection{$\omega$ Analysis with Single Update}

This section analyzes the behavior of $\omega$ for a constant $A \geq 1$ and an update $u$ positioned at the center of the sigmoid. Assuming $D$ originates from a uniform domain and is large enough to reflect this distribution, we study $\omega$ for a specific error level $\epsilon$. For an update $u$ between $k_i$ and $k_{i+1}$:

\vspace{-1em}
\begin{equation}
\arg\min \omega~:~\max \{M(k_{i+1})-M'(k_{i+1}) + 1, M'(k_i)-M(k_i)\}\le \epsilon
\label{eq:error_cond}
\end{equation}
\vspace{-1em}

\begin{theorem}
For the adjustment model $M'$ and error $\epsilon > 0$, and a random update $\min D < u < \max D$, we have, $\mathbb{E}[\omega] \le \frac{2(\vert D \vert -1)}{\max{D} - \min{D}} \ln \big(\frac{A-\epsilon}{\epsilon}\big)$.
\end{theorem}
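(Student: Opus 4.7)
The plan is to obtain a per-update expression for the minimum admissible $\omega$ from the error condition (\ref{eq:error_cond}), then average over a uniformly random $u$ using the fact that a uniform dataset $D$ has constant gap $\Delta = (\max D - \min D)/(|D|-1)$.

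First, I would substitute $M'(k) = M(k) + \sigma(k, A, \omega, u)$ with $\phi = u$ into the two components of the $\max$ in (\ref{eq:error_cond}). At the left endpoint this gives $M'(k_i) - M(k_i) = \sigma(k_i,A,\omega,u) = A/(1+e^{\omega(u-k_i)}) \le \epsilon$, which rearranges to $\omega(u-k_i) \ge \ln\!\tfrac{A-\epsilon}{\epsilon}$. At the right endpoint, $M(k_{i+1}) - M'(k_{i+1}) + 1 = 1 - A/(1+e^{-\omega(k_{i+1}-u)}) \le \epsilon$, which rearranges to $\omega(k_{i+1}-u) \ge \ln\!\tfrac{1-\epsilon}{A-1+\epsilon}$; under the standing assumption $A \ge 1$ with $A$ close to $1$, this is no stronger than the left constraint, so the left side is binding (and at $A = 1$ the two coincide exactly). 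Taking the $\arg\min$ then yields $\omega^{\star}(u) = \ln\!\tfrac{A-\epsilon}{\epsilon} \cdot 1/\min(u-k_i,\,k_{i+1}-u)$.

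Next, I would exploit the uniform-domain assumption. Every consecutive gap has the same length $\Delta = (\max D - \min D)/(|D|-1)$, so conditional on which gap $u$ lands in, $u$ is uniform in $(k_i,k_{i+1})$, and by translation invariance the conditional expectation of $\omega^{\star}$ does not depend on $i$. Hence $\mathbb{E}[\omega^{\star}] = \ln\!\tfrac{A-\epsilon}{\epsilon}\cdot\mathbb{E}_{u\sim U(k_i,k_{i+1})}[1/\min(u-k_i,k_{i+1}-u)]$, and substituting $1/\Delta = (|D|-1)/(\max D - \min D)$ will produce the prefactor of the target bound.

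The main obstacle, and the delicate step, is the final averaging: the integrand $1/\min(u-k_i,k_{i+1}-u)$ is not integrable near the gap endpoints. To close the argument and recover exactly the factor $2/\Delta$, I would (a) use the fact that updates landing on an existing key are absorbed by the buffer $B$ (Section~\ref{sec:dataprep}) rather than triggering a sigmoid, so $u$ is effectively bounded away from $k_i$ and $k_{i+1}$ by the minimum resolution $d$, and (b) replace the raw minimum by its typical mid-gap value $\Delta/2$ via a symmetric Jensen-style argument on the conditional expectation $\mathbb{E}[1/\min(\cdot,\cdot) \mid \min(\cdot,\cdot)\ge c]$ and let $c$ approach the natural minimum distance. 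This produces $\mathbb{E}[\omega^{\star}] \le \tfrac{1}{\Delta/2}\ln\!\tfrac{A-\epsilon}{\epsilon} = \tfrac{2(|D|-1)}{\max D - \min D}\ln\!\tfrac{A-\epsilon}{\epsilon}$, which is the claimed bound. I expect the justification of this truncation/replacement to require the most care, since a purely unconditional integration diverges.
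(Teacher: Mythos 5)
Your proposal follows essentially the same route as the paper: derive the pointwise constraint $\omega \le \frac{1}{\theta}\ln\bigl(\frac{A-\epsilon}{\epsilon}\bigr)$ with $\theta=\min(u-k_i,\,k_{i+1}-u)$ from the error condition, then average over a uniformly random $u$ using the expected gap $\Delta=\frac{\max D-\min D}{|D|-1}$. Your treatment of the right-endpoint constraint (showing the left one is binding, with equality at $A=1$) is in fact more explicit than the paper, which just invokes sigmoid symmetry. The real divergence is in the final averaging step, and here you have correctly located the soft spot: the paper models $\theta$ as uniform on $(0,\mathbb{E}[\mathrm{quantile}])$ and then simply substitutes $\mathbb{E}[\theta]=\frac{1}{2}\mathbb{E}[\mathrm{quantile}]$ into the pointwise bound, i.e.\ it performs exactly the ``replace $\min$ by $\Delta/2$'' heuristic you propose, without acknowledging that $\mathbb{E}[1/\theta]$ is not controlled by $1/\mathbb{E}[\theta]$. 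However, your proposed repair does not close the gap either: Jensen's inequality gives $\mathbb{E}[1/\theta]\ge 1/\mathbb{E}[\theta]$, the wrong direction for an upper bound, and the truncated expectation $\mathbb{E}[1/\theta \mid \theta\ge c]$ grows like $\frac{1}{\Delta}\ln\frac{\Delta}{c}$, so it diverges rather than tending to $2/\Delta$ as $c$ approaches zero. To make the argument rigorous you would have to keep the cutoff at the physical minimum key separation $d$ (absorbing on-key collisions into the buffer, as you note), which yields a bound depending on $\ln(\Delta/d)$ rather than the stated constant $2/\Delta$ — a weaker statement than the theorem as written, but one that, unlike both your limiting argument and the paper's substitution of $\mathbb{E}[\theta]$, is actually justified.
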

\begin{proof}
Let $k_i < u < k_{i+1}$, $i \in (1:n-1)$, and define $d$ as the minimum distance from $u$ to its neighboring elements:
$$
\theta = \min\{ u-k_i, k_{i+1}-u \}, k = \arg\min\limits_{k_i,k_{i+1}}\{ u-k_i, k_{i+1}-u\}.
$$
In a uniform distribution, the distance between two elements is also uniformly distributed. If $X \sim uniform(a,b)$, then for $x, x' \sim X$, the random variable $Y = \vert x-x' \vert$ follows $Y \sim uniform(0, b-a)$. Thus, the distribution of $\theta$ is $uniform(0, \max D - \min D)$. For $\vert D \vert$ elements:
\begin{equation}
\resizebox{.3\columnwidth}{!}{$
    \mathbb{E}[quantile] = \frac{\max D - \min D}{\vert D \vert -1}.
    \label{eq:qexpectation}
    $}
\end{equation}
Since $u$ falls into one of the $\vert D \vert - 1$ quantile:
\begin{equation}
\theta \sim uniform \big(0, \mathbb{E}[quantile] \big).
\label{eq:dexpectation}
\end{equation}

Assume $u=0$ and $k=k_i$ (left side closest). Using the sigmoid symmetry, the same analysis applies for $k=k_{i+1}$. Then, $M'(\theta) = M(\theta) + \frac{A}{1 + e^{\omega \theta}}$.
From inequality \ref{eq:error_cond}:
\begin{equation}
\resizebox{.4\columnwidth}{!}{$
\begin{aligned}
    \nonumber & \frac{A}{1+e^{\omega \theta}} \leq \epsilon \rightarrow 1+e^{\omega \theta} \geq \frac{A}{\epsilon} \rightarrow e^{\omega \theta} \geq \frac{A}{\epsilon} - 1 \rightarrow \\ 
    & \ln\bigg(\frac{A}{\epsilon} - 1\bigg) \geq \omega \theta \rightarrow \omega \leq \frac{1}{\theta} \ln\bigg(\frac{A}{\epsilon} - 1\bigg).
\end{aligned}
$}
\end{equation}
Using $\mathbb{E}[\theta]$ from Eq.~\ref{eq:dexpectation}: $
\mathbb{E}[\omega] \leq \frac{2}{\mathbb{E}[quantile]} \ln\big(\frac{A-\epsilon}{\epsilon}\big).
$
Substituting Eq.~\ref{eq:qexpectation} gives the result. This conclusion applies symmetrically to the right side ($k = k_{i+1}$).
\end{proof}

\noindent This shows the system numerically bounded and parameters change are monotone as the system receives updates.

\subsection{Neural Network Learning Feasibility}
\label{sec:nn_feasibility}

This section provides a theoretical framework to show that the proposed neural network operates within a feasible solution space for optimal solutions. We derive a feasibility condition and prove that a parameter configuration satisfying it always exists. Even in the worst case, where each sigmoid covers a single update, the sigma-sigmoid modifications ensure the total error near the update is limited to $\epsilon$.

\noindent The minimum distance $d$ represents the densest region, where updates affect the center with distance $d$ from it. Assuming the closest key is at the left boundary, $-d, -2d, -3d, \dots$, the effect of $\sigma(0, A, 1, \phi)$ on the index prediction is:
\begin{equation}
\resizebox{8em}{!}{$
\sum\limits_{i=0}^{|U|-1}\frac{A}{1+e^{\omega.d. i }} \le \epsilon.
\label{eq:max_error_effect_bound}
$}
\end{equation}
\noindent Note that the size update is the same as buffer size $|U|=\rho$.
\begin{lemma}
Given Equation~\ref{eq:max_error_effect_bound} with $\omega > 0$, $d > 0$, and $\epsilon > 0$, the upper bound for $|U|$ is:
\begin{equation}
\resizebox{11em}{!}{$
\rho = |U| \le \frac{2}{\omega d} \ln\left(\frac{Ae^{\frac{\omega d \epsilon}{A}} - A}{\epsilon}\right).
$}
\label{eq:error_ineq}
\end{equation}
\label{theo:approx}
\end{lemma}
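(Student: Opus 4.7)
The plan is to invert the accumulated-error constraint on $|U|$ by turning the discrete sum into a tractable closed form and then isolating $|U|$ via logarithms.

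First, I would observe that the summand $g(i):=\frac{A}{1+e^{\omega d i}}$ is positive, strictly decreasing in $i$, and decays exponentially. Two standard tools suggest themselves: an integral comparison (since $g$ is monotone) or an elementary geometric-series majorization using the AM-GM inequality $1+e^{\omega d i}\ge 2\,e^{\omega d i/2}$, which yields $g(i)\le \tfrac{A}{2}\,e^{-\omega d i/2}$. I would pursue the geometric route because it produces a finite geometric-series partial sum whose closed form is easy to invert for $|U|$; the integral approach leads to a $\ln\!\frac{2}{1+e^{-\omega d|U|}}$ expression that is harder to reshape into the exact form claimed in the lemma.

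Second, summing the majorant from $i=0$ to $|U|-1$ gives a finite geometric series equal to a rational function of $e^{-\omega d|U|/2}$. Imposing $\sum_i g(i)\le\epsilon$ then reduces to an inequality of the form
\begin{equation*}
C(A,\omega,d)\bigl(1-e^{-\omega d|U|/2}\bigr)\le \epsilon,
\end{equation*}
which I would solve by isolating $e^{-\omega d|U|/2}$ and taking logarithms. The factor $\tfrac{2}{\omega d}$ in the claimed bound arises precisely from the $\omega d/2$ inside this exponent, while the argument of the $\ln$ is built from the ratio $A/\epsilon$ together with the geometric-series denominator $1-e^{-\omega d/2}$.

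Third, I would simplify the expression inside the logarithm so that it matches the target $\frac{A(e^{\omega d\epsilon/A}-1)}{\epsilon}$. Rewriting $1-e^{-x}=e^{-x}(e^x-1)$ and grouping the $A/\epsilon$ factors should expose the $e^{\omega d\epsilon/A}-1$ pattern; I would also record where the intermediate inequality is tight so that the resulting bound is not vacuous. Finally I would check the regime of validity: the logarithm's argument must exceed $1$, equivalently $e^{\omega d\epsilon/A}>1+\epsilon/A$, which rules out trivial parameter choices (for instance $\epsilon\ge A/2$ is already needed so that the single term $g(0)=A/2$ does not itself violate the constraint).

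The main obstacle is the algebraic fit in step three: a loose choice of majorant leaves residual terms that prevent the logarithm from collapsing to exactly $\ln\frac{A(e^{\omega d\epsilon/A}-1)}{\epsilon}$. The key trick will likely be to combine the geometric majorization with a careful handling of the first term (since $g(0)=A/2$ contributes disproportionately) and to use $1-e^{-\omega d/2}$ in a way that cancels against the $\omega d$ factor in the final logarithm, so that the $e^{\omega d\epsilon/A}$ factor emerges naturally rather than as an approximation.
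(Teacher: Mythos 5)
Your route is genuinely different from the paper's, but it has two concrete gaps. The paper proves Lemma~\ref{theo:approx} by the integral comparison you set aside: it applies Euler--Maclaurin, $\sum_{i=0}^{|U|-1} f(i) \approx \int_0^{|U|-1} f(x)\,dx + \tfrac{1}{2}\left[f(0)+f(|U|-1)\right]$ with $f(x)=\tfrac{1}{1+e^{\omega d x}}$, writes the integral in the logarithmic form $\tfrac{1}{\omega d}\left[\ln\left(1+e^{\omega d(|U|-1)}\right)-\ln 2\right]$, and then exponentiates the constraint to isolate $|U|$, obtaining $|U|\le\tfrac{1}{\omega d}\ln\left(2e^{\omega d\epsilon}-1\right)+1$ and only then rewriting it (approximately) in the stated form. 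Your first gap is logical direction: the AM--GM majorization $g(i)\le\tfrac{A}{2}e^{-\omega d i/2}$ gives an \emph{upper} bound on the accumulated error, so the hypothesis $\sum_i g(i)\le\epsilon$ of Equation~\ref{eq:max_error_effect_bound} does not let you conclude $C(A,\omega,d)\left(1-e^{-\omega d|U|/2}\right)\le\epsilon$; that step reverses the implication. A majorant can only certify that a given $|U|$ is admissible (the sufficient/feasibility direction used later in Theorem~\ref{theo:feasibility}); the lemma as stated, ``given the constraint, $|U|$ is at most $\dots$'', needs a \emph{lower} bound on the sum, e.g. $\tfrac{A}{1+e^{\omega d i}}\ge\tfrac{A}{2}e^{-\omega d i}$ — and because the series converges, even that route yields a finite bound only under an extra smallness condition on $\epsilon$.

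The second gap is the algebraic fit you flag in step three: it cannot be repaired. Inverting the geometric-series bound gives expressions of the type $\tfrac{2}{\omega d}\ln\tfrac{A}{A-2\epsilon\left(1-e^{-\omega d/2}\right)}$ (valid only when $2\epsilon\left(1-e^{-\omega d/2}\right)<A$), in which $\epsilon$ enters through a rational ratio, whereas the target $\tfrac{2}{\omega d}\ln\tfrac{A\left(e^{\omega d\epsilon/A}-1\right)}{\epsilon}$ has $\epsilon$ inside an exponential. These are structurally different functions of $\epsilon$ and are not related by any exact identity; the factor $e^{\omega d\epsilon/A}$ is precisely the artifact of exponentiating the integral (logarithmic) form, i.e., of the route you discarded as ``harder to reshape.'' So to reproduce the lemma's bound you must follow the integral/Euler--Maclaurin comparison (accepting, as the paper does, its explicit approximation steps), or else be content with a geometric-series bound of a different closed form and the weaker, sufficiency-direction statement it actually supports.
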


\begin{proof}
Let $f(x) = \frac{1}{1+e^{\omega d x}}$. Using the Euler-Maclaurin formula, we approximate the sum: $\sum\limits_{i=0}^{|U|-1} f(i) \approx \int_0^{|U|-1} f(x) dx + \frac{1}{2}[f(0) + f(|U|-1)]$. Focusing on the integral: $\int_0^{|U|-1} f(x) dx = \frac{1}{\omega d} [\ln(1+e^{\omega d (|U|-1)}) - \ln(2)]$. Then, given $\sum\limits_{i=0}^{|U|-1} f(i) \le \epsilon$, we derive: $|U| \le \frac{1}{\omega d} \ln(2e^{\omega d \epsilon} - 1) + 1 \approx \frac{2}{\omega d} \ln\big(\frac{Ae^{\frac{\omega d \epsilon}{A}} - A}{\epsilon}\big)$, so this completes the proof.
\end{proof}

\begin{theorem}
For any $\rho = \vert U \vert \geq 2$, $d > 0$, and $\epsilon > 0$, there exists a configuration of $\omega$ and $A$ satisfying Equation~\ref{eq:error_ineq}.
\label{theo:feasibility}
\end{theorem}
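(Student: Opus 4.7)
The plan is to prove the feasibility statement by exhibiting, for each input triple $(\rho, d, \epsilon)$, an explicit pair $(\omega, A)$ that drives the right-hand side of Equation~\ref{eq:error_ineq} above $\rho$. The guiding intuition is that the bound depends on $\omega$ and $A$ only through the composite quantity $t := \omega d \epsilon / A$, so if $t$ can be pushed to infinity independently of the data $(d, \epsilon)$, then the bound itself will diverge.

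I would begin with two normalising substitutions: set $\omega := 1/d$ to absorb the spacing $d$, and then set $A := \epsilon/c$ for a free parameter $c > 0$. After these substitutions, the product $\omega d \epsilon / A$ equals $c$, and the right-hand side of Equation~\ref{eq:error_ineq} collapses to the single-variable expression $2\ln\bigl((e^c - 1)/c\bigr)$, which depends neither on $d$ nor on $\epsilon$. This decoupling is the key move: it reduces a two-parameter feasibility question with three free data values to an asymptotic question in a single variable $c$.

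The final step is an elementary asymptotic estimate. For any $c \ge 1$ one has $e^c - 1 \ge e^c/2$, which immediately gives $2\ln\bigl((e^c-1)/c\bigr) \ge 2c - 2\ln(2c)$, and this lower bound tends to $+\infty$ as $c \to \infty$. For any target $\rho \ge 2$ it therefore suffices to pick $c$ large enough that $c - \ln(2c) \ge \rho/2$, which is straightforward in closed form (e.g.\ any $c \ge \rho + \ln\rho + 2$ works). Unwinding the substitutions then yields the desired witness $\omega = 1/d$, $A = \epsilon/c$, completing the construction.

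I do not expect a genuine obstacle in the mathematics; the argument reduces to a rescaling followed by the identity $\ln(e^c) = c$. The only subtlety worth flagging is consistency with the Euler--Maclaurin approximation underlying Lemma~\ref{theo:approx} in the small-$A$ regime, since our witness sends $A$ toward zero. I would address this either by verifying that the remainder term in the Euler--Maclaurin step is controlled uniformly for the chosen range of parameters, or by treating the theorem as a statement purely about Equation~\ref{eq:error_ineq} as written, under which reading the construction above closes the proof.
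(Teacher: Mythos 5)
Your construction is correct, but it takes a genuinely different route from the paper's. The paper fixes $A=\epsilon$, which collapses the right-hand side of Equation~\ref{eq:error_ineq} to $\frac{2}{\omega d}\ln\bigl(e^{\omega d}-1\bigr)$, and then argues existence of a suitable $\omega$ from continuity together with a claimed divergence of this one-variable function as $\omega\to 0^+$. You instead normalise $\omega=1/d$ and let $A=\epsilon/c$ with $c\to\infty$, reducing the bound to $2\ln\bigl((e^c-1)/c\bigr)$ and producing an explicit closed-form witness (any $c\ge \rho+\ln\rho+2$). Your scaling buys something real: with the paper's choice $A=\epsilon$, the function $\frac{2}{x}\ln(e^{x}-1)$ (with $x=\omega d$) in fact tends to $-\infty$ as $x\to 0^+$ and increases toward its supremum $2$ as $x\to\infty$, so it never strictly exceeds $2$; the divergence the paper invokes goes in the wrong direction, and feasibility for $\rho\ge 2$ genuinely requires letting the ratio $\epsilon/A$ grow, which is exactly what your parametrisation does while varying $\omega$ alone at fixed $A=\epsilon$ cannot. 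So your argument is not merely an alternative but a repair of the existence claim, at the price of driving the amplitude $A$ toward zero. Your closing caveat is the right one: the theorem as stated concerns Equation~\ref{eq:error_ineq} itself, so the small-$A$ witness is formally unobjectionable, but since the inequality was derived via the Euler--Maclaurin approximation in Lemma~\ref{theo:approx} (where $A$ is meant to be a step amplitude of order one), the regime $A\to 0$ should be read as satisfying the inequality rather than as a practically meaningful sigmoid configuration, and a version of the theorem that constrains $A$ (e.g.\ $A\ge 1$) would need a different argument.
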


\begin{proof}
$A = \epsilon$ simplifies the inequality $|U| \le \frac{2}{\omega d} \ln\left(e^{\omega d} - 1\right)$. The function $f(\omega) = \frac{2}{\omega d} \ln(e^{\omega d} - 1)$ is continuous for $\omega > 0$ and diverges as $\omega \to 0^+$. Thus, for any finite $|U| \geq 2$, there exists an $\omega > 0$ satisfying the inequality.
\end{proof}

\noindent Theorem~\ref{theo:feasibility} shows that the system can achieve a parameter configuration that reduces the approximation error, regardless of the update(buffer) size. If the system fails after extensive iterations ($MaxIter$ in Algorithm~\ref{alg:nn_training}), retraining (Section~\ref{sec:LI-retraining}) becomes necessary, not due to the capacity of the Sigma-Sigmoid model.

\subsection{Complexity Analysis}
\textbf{Updates Time Complexity.}
As shown in Figure~\ref{fig:delay}, Sig2Model employs a multi-stage approach to handle updates while minimizing retraining frequency. The system first attempts to insert new updates into available placeholders using Gaussian Mixture Model (GMM) allocation, which has a constant time complexity of $\mathcal{O}(1)$. When no placeholders remain, updates are stored in a B+tree buffer with logarithmic time complexity $\mathcal{O}(\log \rho)$, where $\rho$ represents the buffer's maximum capacity. 
Once the buffer reaches capacity ($\rho$ updates), Sig2Model performs incremental integration of the buffered updates into the model using SigmaSigmoid boosting. This operation has a time complexity of $\mathcal{O}(s + \rho \log \rho)$, where $s$ denotes the current size of the index array. Finally, when the number of active SigmaSigmoids reaches the system's capacity $\mathcal{N}$, Sig2Model initiates a full retraining of the RadixSpline model with time complexity $\mathcal{O}(N \log N)$, where $N = s + \rho$ represents the total data size (existing data plus buffered updates).

\begin{figure}[H]
  \centering
  \makebox[0.7\columnwidth][c]{\includegraphics[width=0.7\columnwidth]{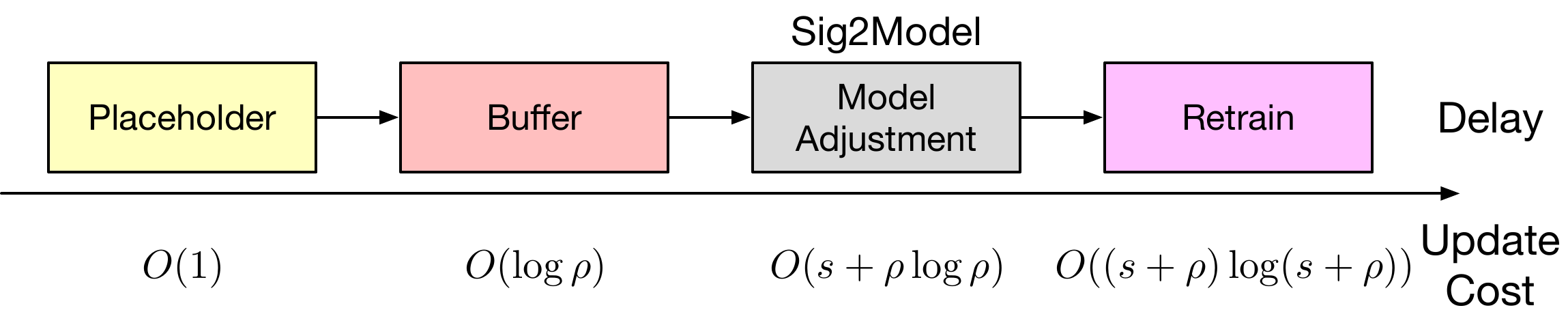}}
  \caption{\small{} \textbf{Approaches to delaying retrain categorized by insertion cost. $s$ is the size of data and $\rho$ is buffer size.}}
  \label{fig:delay}
\end{figure}

\textbf{Algorithms Time Complexity.}
The time complexity of the algorithms in Sig2Model is as follows: (1) Lookup: $O(\text{Model} + \mathcal{N} + \log(E_\tau + \epsilon))$, as it requires inference using the learned index model, followed by $\Pi$ parameters with $\mathcal{N}$ sigmoids, and a final search of the data. (2) GMM clustering (Algorithm~\ref{alg:init_gmm}): $O(\vert D^0\vert)$, which requires one pass over the initial data. (3) Reindexing reply buffer (Algorithm~\ref{alg:rb_index_update}): $O(\kappa + \rho)$, requiring a pass over both the buffer and the current reply buffer. (4) Updating the reply buffer (Algorithm~\ref{alg:rb_update}) from $rb(t-1)$ to $rb(t)$: $O(\rho)$, requiring a single pass over the buffer size.(4) Construction Cost: The adjustment model requires a memory complexity of $O(\mathcal{N})$, if we have $NN_\Psi$  module complexity change to $O(\mathcal{N} + K)$. Additional overheads arise from tasks such as generating the initial index model, $M_0$, trained on the dataset $D^0$. These overheads depend on the primary index modeling; in our case, we used RadixSpline~\cite{kipf2020radixspline}. Additionally, the process of training a neural network on $D^0$ incurs a computational complexity of $O(MaxIter \times |D^0|)$.

\section{Experimental Settings}
\label{sec:exp_setup}

\textbf{Environment.}  
Sig2Model is implemented in C++17 and compiled with GCC 9.0.1. We use PyTorch for the neural network implementation~\cite{paszke2019pytorch}. The evaluation is performed on an Ubuntu 20.04 machine with an AMD Ryzen ThreadRipper Pro 5995WX (64-core, 2.7GHz) and 256GB DDR4 RAM, and a data-centered GPU with 40GB vRAM.

\noindent
\textbf{Datasets.}  
Sig2Model is assessed using three SOSD benchmark datasets~\cite{sosd}:  
\textit{(1) FB~\cite{FB}:} 200M Facebook user IDs,  
\textit{(2) Wiki~\cite{WikiTS}:} 190M unique integer timestamps from Wikipedia logs,  
\textit{(3) Logn:} 200M values sampled from a log-normal distribution ($\mu=0$, $\sigma=1$).  
Key-value pairs are pre-sorted by key before Sig2Model initialization. All experiments use 8-byte keys from the datasets with randomly generated 8-byte values.

\noindent
\textbf{Baselines.}  
We compare Sig2Model against:  
\textit{(1) B+Tree:} A standard STX B+Tree~\cite{stx},  
\textit{(2) Alex~\cite{ding2020alex}:} An in-place learned index,  
\textit{(3) LIPP~\cite{10.1145/3589284}:} A delta-buffer learned index,  
\textit{(4) DILI~\cite{10.14778/3598581.3598593}:} A hybrid index combining in-place and delta-buffer methods.  
Open-source implementations are used for comparisons.

\noindent
\textbf{Workloads.}  
QPS is measured on four workloads:  
\textit{(1) Read-Only:} 100\% reads,  
\textit{(2) Read-Heavy:} 90\% reads, 10\% writes,  
\textit{(3) Write-Heavy:} 50\% reads, 50\% writes,  
\textit{(4) Write-Only:} 100\% writes.  
Read/write scenarios interleave operations (e.g., 19 reads per write in read-heavy). The keys are randomly selected using a Zipfian distribution~\cite{zipfian}.

\noindent
\textbf{Metrics.}  
Evaluation metrics include:  
\textit{QPS:} Average operations per second,  
\textit{Latency:} 99th percentile operation latency,  
\textit{Index size:} Combined size of the index and neural network model.

\noindent
    \textbf{System Parameters.}  
    System parameters are tuned by sensitivity analysis: buffer size ($\rho = 1000$), replay buffer size ($\kappa = 500$), sigmoid capacity ($\mathcal{N}=20$), RadixSpline error range ($128$)~\cite{kipf2020radixspline}, confidence level ($\delta=0.95$), regularization parameters ($\nu=0.5$, $c=\gamma=1$), $MaxIter=100$, $\epsilon_\Pi$ and $\epsilon_\Psi$ (Algorithm~\ref{alg:nn_training}) both set to $0.01$, sampling fraction ($\lambda=0.1$, Algorithm~\ref{alg:rb_update}), and regularization coefficients in Equations~\ref{eq:cost:pi} and~\ref{eq:cost:psi} set to $1$. The value $d$ is empirically determined for the initial data ($D^0$) of each dataset. $D^0$ size ($s_0$) is $50\%$ of the respective dataset size. 

\section{Detailed Evaluations Results}
\label{app:eval}


\subsection{CPU vs. GPU Training.}  
GPU training significantly outperforms CPU training, especially as the number of sigmoids $\mathcal{N}$ increases. Due to parallel processing, GPUs scale more efficiently, widening the performance gap at higher $\mathcal{N}$. Table~\ref{tab:cpu_gpu_training} shows this trend—while GPU time grows moderately, CPU time increases steeply, making GPUs essential for larger model capacities.  

\begin{table}[H]
    \centering
    \caption{Training Time Comparison: CPU vs. GPU (in milliseconds)}
    \label{tab:cpu_gpu_training}
    \begin{tabular}{c|c|c}
        \hline
        $\mathcal{N}$ & \textbf{CPU (ms)} & \textbf{GPU (ms)} \\
        \hline
        1  & 15   & 17  \\
        5  & 241  & 28  \\
        10 & 399  & 56  \\
        20 & 955  & 108  \\
        50 & 4705 & 174 \\
        \hline
    \end{tabular}
\end{table}

\subsection{GMM Impact Analysis.}
\label{sec:gmm-impact}
Table~\ref{tab:gmm} compares $Sig2Model_{GMM}$ (GMM-based placeholder placement) with $Sig2Model_{rand}$ (random placement). $Sig2Model_{GMM}$ has 32\% higher update latency and 19.5\% more memory usage on average, as it strategically places placeholders based on predicted update distributions. In contrast, $Sig2Model_{rand}$ randomly places slots, leading to many unused placeholders. The lowest increase in latency and memory usage is observed for the Logn dataset due to its complex distribution.

\begin{table}[H]
\centering
\caption{Normalized update latency and memory usage of $Sig2Model_{rand}$ over $Sig2Model_{GMM}$ on the write-heavy workload over different datasets.}
\label{tab:gmm}
\begin{tabular}{c|c|c}
\hline
\textbf{Dataset} & \textbf{Latency} & \textbf{Memory} \\ \hline
\textbf{Wiki}    & 43.2\%           & 34.6\%          \\ \hline
\textbf{Logn}    & 16.3\%           & 4.2\%           \\ \hline
\textbf{FB}      & 36.7\%           & 19.9\%          \\ \hline
\end{tabular}
\end{table}

\subsection{Sig2Model Parallelization on Inference Module}
\label{app:parallelization}
Since the Sigma-Sigmoid model boosted by multiple weak sigmoid learners, its computations can be parallelized across multiple threads, improving performance by distributing the workload. Ideally, the number of threads can be increased to \( \mathcal{N} + 1 \); however, there is a trade-off between the benefits of parallelization and I/O contention.

To evaluate the impact of thread parallelization, we varied the number of threads to process Sigmoid components in the Sig2Model. Results, averaged over $5$ runs (Figure~\ref{fig:parallelization}), show overhead dropping sharply to $1.0\%$ at $7$ threads. Beyond this, performance degrades slightly due to result aggregation costs.
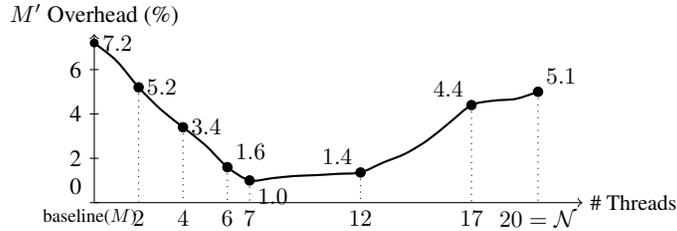
\begin{figure}[H]
    \centering
    \begin{tikzpicture}[scale=0.59, transform shape]
        \draw[->] (0,0) -- (11,0) node[right, scale=1.4] {\#~Threads};
        \draw[->] (0,0) -- (0,3.8) node[above, scale=1.5] {$M'$~Overhead (\%)};

        \draw (0.1,0) -- (-0.1,0) node[below, scale=1.2] {baseline($M$)} node[above left, scale=1.5] {0};
        \draw (0.1,1) -- (-0.1,1) node[left, scale=1.5] {2};
        \draw (0.1,2) -- (-0.1,2) node[left, scale=1.5] {4};
        \draw (0.1,3) -- (-0.1,3) node[left, scale=1.5] {6};

        \filldraw[black] (0,3.6) circle (2.5pt) node[right, scale=1.5] {$7.2$}; 
        \filldraw[black] (1,2.6) circle (3pt) node[right, scale=1.5] {$5.2$}; 
        \filldraw[black] (2,1.7) circle (3pt) node[right, scale=1.5] {$3.4$};
        \filldraw[black] (3,0.8) circle (3pt) node[above right, scale=1.5] {$1.6$}; 
        \filldraw[black] (3.5,0.5) circle (3pt) node[below right, scale=1.5] {$1.0$};
        \filldraw[black] (6,0.68) circle (3pt) node[above left, scale=1.5] {$1.4$};
        \filldraw[black] (8.5,2.2) circle (3pt) node[above left, scale=1.5] {$4.4$};
        \filldraw[black] (10,2.5) circle (3pt) node[above right, scale=1.5] {$5.1$};

        \draw[thick, smooth, tension=0.5] plot coordinates {
            (0,3.6) 
            (0.5,3.2) 
            (1,2.6) 
            (1.5,2.1) 
            (2,1.7) 
            (2.5,1.3) 
            (3,0.8) 
            (3.5,0.5) 
            (4,0.55) 
            (4.5,0.6) 
            (5,0.62) 
            (5.5,0.65) 
            (6,0.68) 
            (6.5,0.9) 
            (7,1.1) 
            (7.5,1.4) 
            (8,1.8) 
            (8.5,2.2) 
            (9,2.3) 
            (9.5,2.34) 
            (10,2.5) 
        };

    \draw[dotted] (1,2.6) -- (1,0) node[below, scale=1.5] {$2$};
    \draw[dotted] (2,1.7) -- (2,0) node[below, scale=1.5] {$4$};
    \draw[dotted] (3,0.8) -- (3,0) node[below, scale=1.5] {$6$};
    \draw[dotted] (3.5,0.5) -- (3.5,0) node[below, scale=1.5] {$7$};
    \draw[dotted] (8.5,2.2) -- (8.5,0) node[below, scale=1.5] {$17$};
    \draw[dotted] (6,0.68) -- (6,0) node[below, scale=1.5] {$12$};
    \draw[dotted] (10,2.5) -- (10,0) node[below, scale=1.5] {$20=\mathcal{N}$};

    \end{tikzpicture}
    \caption{Overhead $M'$ compared to $M$ as a function of the number of threads. The overhead drops sharply until $7$ threads, then degrades slowly due to thread coordination costs.}
    \label{fig:parallelization}
\end{figure}

\noindent At zero threads means single thread for all, the overhead is $7.2\%$, decreasing rapidly with more threads. However, after $7$ threads, degradation begins, reaching $5.1\%$ at $20$ threads (equal to $\mathcal{N}$). This highlights that while threading improves performance, excessive synchronization can negate its benefits.

\subsection{Sensivity Analysis}
\label{app:sensivity}
We performed extensive sensitivity analysis on key hyperparameters. We run the experiments on the full version of Sig2Model on the Wiki Dataset with Read-heavy workload, and generalized these results for all our experiments.

\begin{table}[htbp]
\centering
\caption{Sensivity Analysis Results}
\begin{tabular}{|c|c|c|c|c|}
\hline
\textbf{Parameter (Tested Values)} & \textbf{Value 1} & \textbf{Value 2} & \textbf{Value 3} & \textbf{Value 4} \\
\hline
Sigmoid Size $\mathcal{N}$ & 3.2 (-25.2\%) & 3.9 (-8.6\%) & 4.3 (opt) & 4.1 (-2.9\%) \\
(5, 10, 20, 40) & & & & \\
\hline
Buffer Size $\rho$ & 4.0 (-5.4\%) & 4.1 (-2.6\%) & 4.3 (opt) & 4.2 (-0.8\%) \\
(250, 500, 1000, 2000) & & & & \\
\hline
Replay Buffer $\kappa$ & 4.2 (-1.2\%) & 4.3 (opt) & 4.2 (-0.4\%) & 4.2 (-0.9\%) \\
(250, 500, 1000, 2000) & & & & \\
\hline
Regularization Coeff. & 4.1 (-2.9\%) & 4.2 (-0.6\%) & 4.2 (-0.2\%) & 4.3 (opt) \\
(0, 0.5, 0.75, 1) & & & & \\
\hline
Error Thresholds $\epsilon$ & 4.3 (opt) & 4.2 (-0.8\%) & 4.1 (-2.3\%) & 4.0 (-5.0\%) \\
(0.1, 0.2, 0.4, 0.8) & & & & \\
\hline
\end{tabular}
\end{table}

\subsection{Tail Latency Analysis}

We provide comprehensive p99 latency measurements relative to median latency across different ensemble sizes ($\mathcal{N}$) using our optimal 7-thread configuration on the Wikipedia read-heavy workload.

\begin{table}[htbp]
\centering
\caption{p99 Latency Reduction over Different $\mathcal{N}$}
\begin{tabular}{|c|c|c|c|c|}
\hline
Ensemble Size $\mathcal{N}$ & 5 & 10 & 20 & 40 \\
\hline
p99 Latency Reduction & -38.2\% & -44.6\% & -60.4\% & -87.2\% \\
\hline
\end{tabular}
\end{table}

We also run the same experiment on the other baselines (Table below). Our analysis reveals that while all learned index systems exhibit some tail latency degradation due to retraining, Sig2Model demonstrates significantly better behavior (60.4\% reduction vs 92-127\% for baselines).

\begin{table}[htbp]
\centering
\caption{p99 Latency Reduction over Different Baselines}
\begin{tabular}{|c|c|c|c|c|}
\hline
Baseline & ALEX & LIPP & DILI & Sig2Model ($\mathcal{N} = 20$) \\
\hline
p99 Latency Reduction & -92.0\% & -108.1\% & -127.7\% & -60.4\% \\
\hline
\end{tabular}
\end{table}

This improvement stems from two key factors: (1) LIPP and DILI require frequent retraining (approximately every 500 updates), (2) While ALEX retrains less frequently, each retraining event incurs substantially higher latency. Our design achieves better tail latency by distributing the retraining overhead more evenly through the neural joint optimization framework.

\subsection{Analysis of Ensemble Size Impact on Lookup}
\label{subsec:ensemble_size_impact}

The larger ensemble sizes ($\mathcal{N}$) linearly increase inference time. We address this concern through both empirical analysis and architectural optimizations. First, our sensitivity analysis on the Wiki dataset (read-heavy workload) reveals that Query-per-second (QPS) metric improves with larger $\mathcal{N}$ up to 20, as the benefits of deferred retraining outweigh the latency costs. At $\mathcal{N} = 40$, we observe a 2.9\% QPS drop (see Table~\ref{tab:ensemble_qps}), confirming that excessively large ensembles can negatively impact performance.

\begin{table}[htbp]
\centering
\caption{QPS performance for different ensemble sizes $\mathcal{N}$}
\label{tab:ensemble_qps}
\begin{tabular}{|c|c|c|c|c|}
\hline
Ensemble Size $\mathcal{N}$ & 5 & 10 & 20 (optimal) & 40 \\
\hline
Performance (MQPS) & 3.2 (-25.2\%) & 3.9 (-8.6\%) & 4.3 & 4.1 (-2.9\%) \\
\hline
\end{tabular}
\end{table}

The choice of $\mathcal{N} = 20$ represents a careful balance between update agility and lookup performance. While larger ensembles could theoretically provide greater update capacity, our experiments confirm that $\mathcal{N} = 20$ delivers optimal throughput for read-heavy workloads while still offering substantial improvements in update efficiency compared to traditional learned indexes.

Second, the parallelizable nature of the SigmaSigmoid architecture effectively compensates for the increased computation. As detailed in Appendix~\ref{app:parallelization}, distributing the workload across just 7 threads reduces the parallelization overhead to a negligible 1.0\% for $\mathcal{N} = 20$. This demonstrates that with proper implementation, the latency impact becomes practically insignificant for reasonable ensemble sizes.


\end{document}